\providecommand{\figleft}{{\em (Left)}}
\providecommand{\figright}{{\em (Right)}}
\def\eqref#1{equation~\ref{#1}}
\def\1{\bm{1}}
\def\rs{{\rvfont{s}}}
\def\rvfont{\mathfrak}
\DeclareMathAlphabet{\mathsfit}{\encodingdefault}{\sfdefault}{m}{sl}
\SetMathAlphabet{\mathsfit}{bold}{\encodingdefault}{\sfdefault}{bx}{n}
\def\gA{{\mathcal{A}}}
\def\gB{{\mathcal{B}}}
\def\gD{{\mathcal{D}}}
\def\gM{{\mathcal{M}}}
\def\gP{{\mathcal{P}}}
\def\gQ{{\mathcal{Q}}}
\def\gS{{\mathcal{S}}}
\providecommand{\R}{\mathbb{R}}
\providerobustcmd\diff{\mathop{}\!\mathrm{d}}
\providerobustcmd\Diff[1]{\mathop{}\!\mathrm{d^#1}}
    \titlespacing{\section}{0em}{2ex minus 1.5ex}{1.3ex minus 1ex}
    \titlespacing{\subsection}{0em}{1.8ex minus 1.5ex}{0.8ex minus 0.6ex}
    \titlespacing{\subsubsection}{0em}{1.5ex minus 1.3ex}{0.5ex minus 0.3ex}
    \titlespacing{\paragraph}{0em}{1ex minus 1ex}{1ex}
    \titlespacing{\subparagraph}{0em}{1ex minus 1ex}{1ex}
\title{Horizon Generalization in Reinforcement Learning}
\date{January 4, 2025}
\author{Vivek Myers\equal \\
    UC Berkeley \\
    \texttt{vmyers@berkeley.edu} \And
    Catherine Ji\equal \\
    Princeton University \\
    \texttt{cj7280@princeton.edu} \And
    Benjamin Eysenbach \\
    Princeton University\\
    \texttt{eysenbach@princeton.edu}
}
\preto\wrapfigure{
    \newdimen\oldsep
    \oldsep=\intextsep
    \setlength\intextsep{0pt}
}
\appto\endwrapfigure{
    \setlength\intextsep{\oldsep}
}
\preto\endwrapfigure{
    \vspace*{2ex minus 2ex}
}
\begin{document}
\maketitle

\begin{abstract}
    We study goal-conditioned RL through the lens of generalization, but not in the traditional sense of random augmentations and domain randomization.
    Rather, we aim to learn goal-directed policies that generalize with respect to the horizon: after training to reach nearby goals (which are easy to learn), these policies should succeed in reaching distant goals (which are quite challenging to learn).
    In the same way that invariance is closely linked with generalization is other areas of machine learning (e.g., normalization layers make a network invariant to scale, and therefore generalize to inputs of varying scales), we show that this notion of horizon generalization is closely linked with invariance to planning: a policy navigating towards a goal will select the same actions as if it were navigating to a waypoint en route to that goal.
    Thus, such a policy trained to reach nearby goals should succeed at reaching arbitrarily-distant goals.
    Our theoretical analysis proves that both horizon generalization and planning invariance are possible, under some assumptions.
    We present new experimental results and recall findings from prior work in support of our theoretical results.
    Taken together, our results open the door to studying how techniques for invariance and generalization developed in other areas of machine learning might be adapted to achieve this alluring property.
    \end{abstract}

\section{Introduction}
\label{sec:intro}

\begin{wrapfigure}{r}{0.5\linewidth}
    \vspace*{-3ex}\centering
    \includegraphics[width=\linewidth]{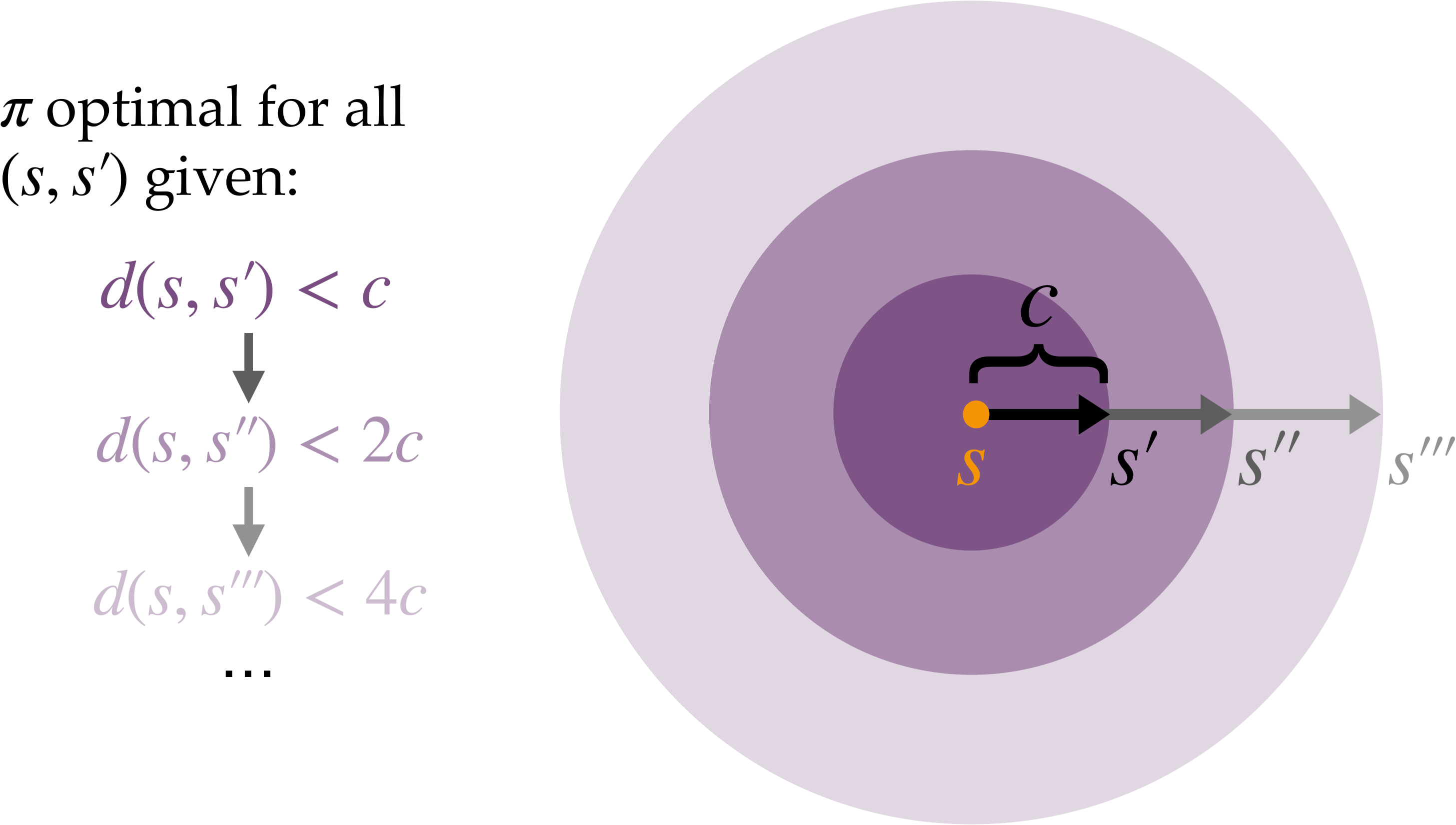}
    \caption{\textbf{Horizon generalization.} A policy generalizes over the horizon if performance for start-goal pairs $(s,g)$ separated by a small temporal distance $d(s,g) < c$ yields improved performance over more distant start-goal pairs $(s',g')$ with $d(s',g') > c$.}
    \label{fig:horizon_generalization}
\end{wrapfigure}
Reinforcement learning (RL) is appealing for its potential to use data to solve long-horizon reasoning problems.
However, it is precisely this horizon that makes solving the RL problem difficult \-- the number of possible solutions to a control problem often grows exponentially in the horizon~\citep{kakade2003sample}.
Indeed, the requirement of collecting long horizon data precludes several potential applications of RL (e.g., health care, robotic manipulation).
Thus, a desirable property of an RL algorithm is the ability to learn from short-horizon tasks and generalize to long-horizon tasks.
We call this property \emph{horizon generalization}.

    \textbf{Horizon Generalization}~(informal statement, see \cref{def:horizon_generalization}):
\emph{
    A goal-conditioned policy generalizes over horizon if, after training to reach nearby goals within the state space, the policy is more successful at reaching distant goals.
}

Prior work on generalization in RL almost exclusively focuses on either \emph{(i)} \emph{perceptual} changes (e.g., changes in lighting conditions), \emph{(ii)} simple randomizations of simulator parameters,  or \emph{(iii)} mapping together states and actions with the same reward or value function.
While these methods show improved performance on perturbed datasets over the same horizon, they do not generalize over horizon.
In this paper, we formalize horizon generalization as a potential property of goal-conditioned RL (GCRL) algorithms, prove that policies with horizon generalization exist, and empirically demonstrate that certain algorithms enable horizon generalization in high-dimensional settings.
We do so in the context of goal-conditioned RL, where agents navigate towards a specific goal in a reward-free setting.

A key mathematical tool for understanding horizon generalization is a form of \emph{temporal} invariance obeyed by optimal policies.
In the same way that an image classification model that is invariant to rotations will generalize to images of different orientations~\citep{cohen2016groupa,lecun2004learning}, we prove that a policy \emph{invariant to planning}, under certain assumptions, will exhibit horizon generalization.

        \textbf{Planning Invariance}~(informal statement, see \cref{def:planning-invariance}):
\emph{
    A goal-conditioned policy is invariant to planning if it can reach distant goals with similar success when conditioned directly on the goal compared to when conditioned on a series of intermediate waypoints.
    In other words, breaking up a complex task into a series of simpler tasks confers no advantage to the policy.
}

The main takeaway from this paper is that there are rich notions of generalization \textit{over the horizon} unique to the goal-conditioned RL (GCRL) setting.
We show that existing quasimetric methods~\citep{wang2023optimal, myers2024learning} \textit{already} exhibit this form of generalization in high-dimensional settings.
By theoretically and empirically linking planning with this form of generalization, our work suggests practical ways (i.e.
quasimetric methods) to achieve powerful notions of generalization from short to long horizons.

\section{Related Work}
\label{sec:prior-work}

Our work builds upon prior work in goal-conditioned RL and generalization in RL. \Cref{sec:current-methods} returns to the discussion of prior work in light of our analysis.

\paragraph{Learning to Reach Goals.}
The problem of learning goal-reaching behavior dates to the early days of AI research~\citep{newell1959report, laird1987soar}.
This problem has received renewed attention in recent years through the study of deep goal-conditioned reinforcement learning (GCRL)~\citep{chen2021decision, chane2021goal, colas2021intrinsically, yang2022rethinking, ma2022how, schroecker2020universal, janner2021offline}.
Goal-conditioned RL relieves the burden of specifying rewards, as any state in the environment can provide a complete task specification when used as a goal.
Some of the excitement in goal-conditioned RL is a reflection of the recent success of self-supervised methods in computer vision (e.g., stable diffusion~\citep{rombach2022high}) and NLP (GPT-4~\citep{openai2024gpt4}): if these methods can achieve intriguing emergent properties~\citep{anil2022exploringa,brohan2023rt2}, might a self-supervised approach to RL unlock emergent properties for RL?

\paragraph{Generalization in RL.}
Prior work on generalization in RL mostly focuses on variations in \emph{perception}~\citep{cobbe2019quantifying, stone2021distracting, laskin2020reinforcement} (or, similarly, e.g., across levels of a game~\citep{nichol2018gotta, farebrother2018generalization, justesen2018illuminating, zhang2018study}).
Similarly, work on robust RL (which measures a worst-case notion of generalization) usually randomly perturbs the physics parameters~\citep{packer2018assessing, eysenbach2021maximum, moos2022robust, tessler2019action, igl2019generalization}).
Our paper will study a different form of generalization: without changing the dynamics or the observations, can a policy trained on nearby goals succeed in reaching distant goals?

\paragraph{State Abstractions for Decision-Making.}
Many approaches for learning improved state abstractions for decision making have been proposed in recent years, including bisimulation \cite{ferns2011bisimulation,castro2010using,zhang2021learning,hansen2022bisimulation}, successor representations~\citep{Dayan1993ImprovingGF,barreto2017successor}, and information-theoretic representation learning objectives \cite{Anand2019UnsupervisedSR,ghosh2019learning,rakelly2021whicha,Castro2021MICoIR,jain2023maximum}.
While prior work typically views generalization as a problem of handling shift between MDPs with similar horizons, horizon generalization is about generalizing from \textit{short to long horizons}.
This form of generalization (to our knowledge) has not been directly addressed by other state abstraction methods.
Prior work that has specifically looked at performing out-of-distribution long-horizon tasks have made assumptions about the environment, such as access to external planners \cite{singh2023progprompt,shah2022viking,myers2024policy} or human demonstrations \cite{mandlekar2021learning}.
Our contribution is to tackle the problem of generalization over the time-horizon in the context of modern, scalable deep RL methods without these additional environment assumptions.

\section{Planning Invariance and Horizon Generalization}
\label{sec:setup}

Our analysis will focus on the goal-conditioned setting. We start by providing intuition for our key formal definitions (planning invariance and horizon generalization), provide important preliminaries on quasimetric methods, and then prove that these properties can be realized by quasimetric methods.

\subsection{Intuition}
\label{sec:intuition}

Many prior works have found that augmenting goal-conditioned policies with planning can significantly boost performance~\citep{savinov2018semi, park2024hiql}: instead of aiming for the final goal, these methods use planning to find a waypoint en route to that goal and aim for that waypoint instead. In effect, the policy chooses a closer, easier waypoint that will naturally bring the agent closer to the final goal.

\begin{wrapfigure}{R}{0.6\linewidth}
    \centering
    \includegraphics[width=\linewidth]{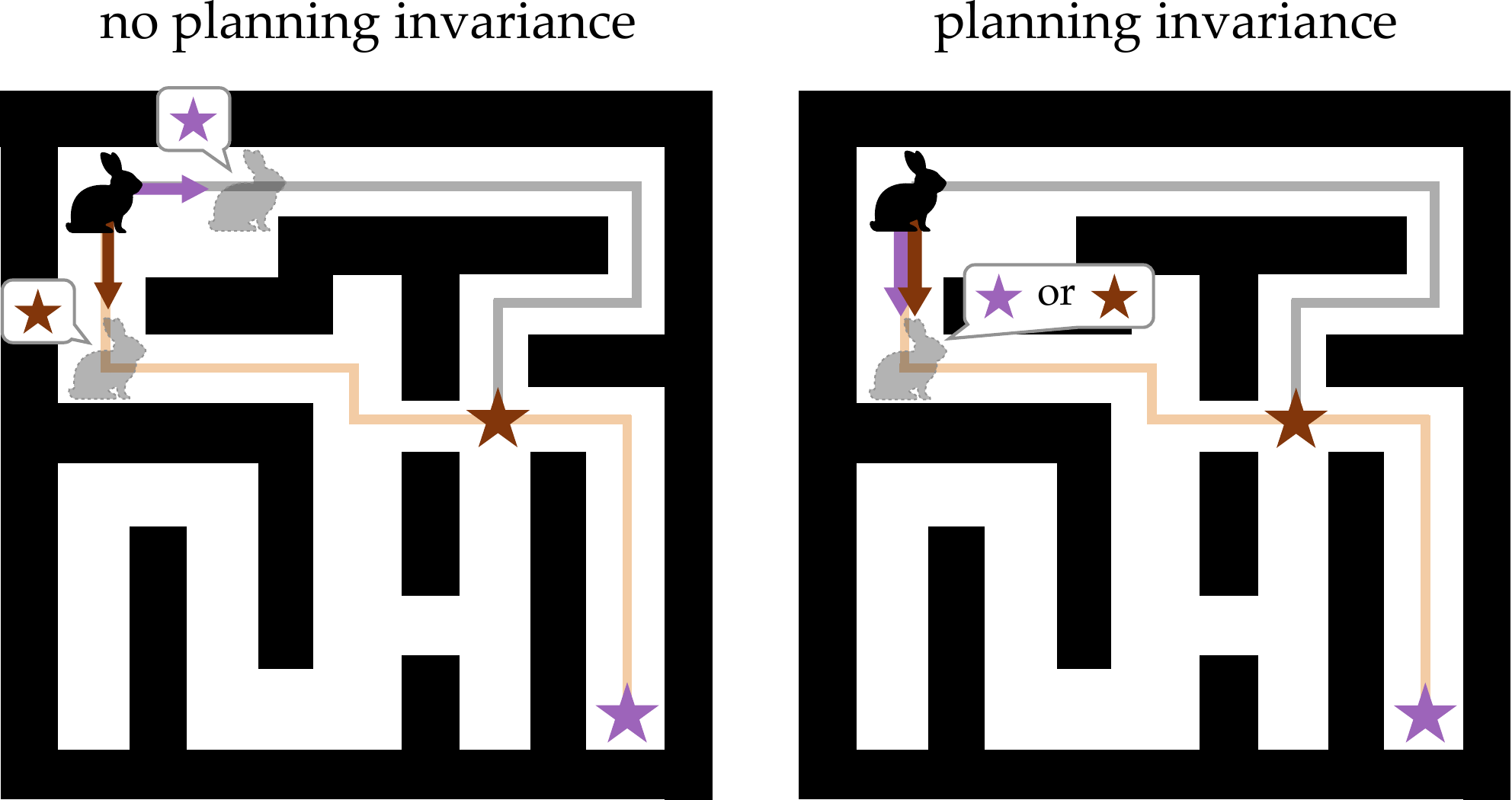}
    \caption{\textbf{Visualizing planning invariance.}
        Planning invariance (\cref{def:planning-invariance}) means that a policy should take similar actions when directed towards a goal (purple arrow and purple star) as when directed towards an intermediate waypoint (brown arrow and brown star).
        We visualize a policy with \textit{(Right)} and without \textit{(Left)} this property via the misalignment and alignment of actions towards the waypoint and the goal, where the optimal path is tan and the suboptimal path is gray.}
    \label{fig:planning_invariance}
\end{wrapfigure}

Invariance to planning (see \cref{fig:planning_invariance}) is an appealing property for several reasons.
First, it implies that the policy realizes the benefits of planning without the complex machinery typically associated with hierarchical and model-based methods.
Second, policies optimal over a space of tasks are, by definition, planning-invariant over the same space with respect to an optimal planner: invariance to planning is a necessary but not sufficient condition for policy optimality, and can be used as an inductive bias to achieve policy optimality.
Third, we show that planning invariance, combined with other assumptions, implies that the policy will exhibit \emph{horizon generalization}: given that a policy successfully navigates short trajectories covering some state space $\gS$, it will succeed at performing long-horizon tasks over the same state space $\gS$ (\cref{fig:horizon_generalization}).

A high-level description of our proof is as follows: when a policy is invariant to planning, tasks of length $n$ and length $2n$ will be mapped to similar internal representations, as will tasks of length $4n$, and $8n$, and so on (see \cref{fig:planning_to_generalization}).
This reasoning also explains how a policy exhibiting horizon generalization must solve problems: by recursion, the policy maps tasks of length $n$, length $n/2$, and shorter lengths seen during training to the same internal representations.
Our proofs formally link these ``forward-looking'' and ``backward-looking'' perspectives, suggesting practical planning-invariant methods like quasimetric methods to achieve horizon generalization.

\begin{figure}
    \includegraphics[width=.95\linewidth]{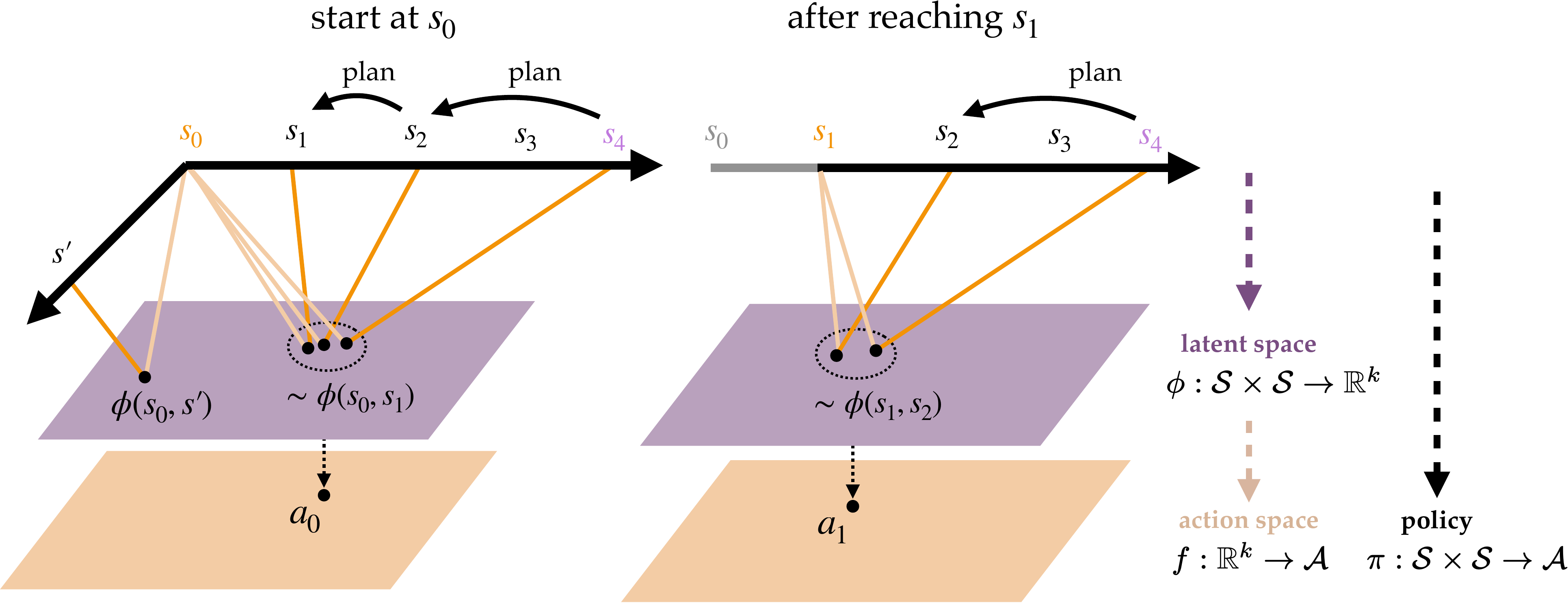}
    \caption{{\bf Invariance to planning leads to horizon generalization.} \figleft~Invariance to planning maps $(s_{0},\{s_{1}, s_{2}, s_{4}\})$ together in latent space, which results in a shared optimal action.
        \figright~After successfully reaching the closest waypoint $s_{1}$ in $1$ step, the next optimal action is also shared, meaning any trajectory of length $2$ is optimal.
    We can repeat this argument for trajectories of length $4,8,\ldots$ until the entire reachable state space is covered.}
    \label{fig:planning_to_generalization}
\end{figure}

With these motivations in hand, how do we actually construct methods that are planning invariant and lead to horizon generalization?
To answer this question, we build upon prior work on quasimetric neural network architectures~\citep{liu2023metric, wang2022learning, wang2022improved} and show that policies defined greedily with respect to a quasimetric, where latents obey the triangle inequality, are invariant to planning with respect to the same quasimetric.

\section{Preliminaries}
\label{sec:prelims}

We consider a controlled Markov process $\gM$ with state space $\gS$, action space $\gA$, and dynamics $p(s' \mid s, a)$.
The agent interacts with the environment by selecting actions according to a policy $\pi(a \mid s)$, which is a mapping from $\gS$ to distributions over  $\gA$.
We further assume the state and action spaces are compact.
We define the \emph{discounted state occupancy measure with actions} as \begin{equation}\ppiy(\rs_{K} = g \mid \rs_{0} = s, a) \triangleq \sum_{t=0}^{\infty} \gamma^{t}p^{\pi}(\rs_{t} = g \mid \rs_{0} = s, a),
\end{equation}
where $p^{\pi}(\rs_{t} = g \mid \rs_{0} = s, a)$ is the probability density that policy $\pi$ visits state $g$ after $t$ time steps when initialized at state $s$ with action a.

\paragraph{Quasimetrics on states.}
We equip $\gM$ with an additional notion of \textit{distance} between states.
We later define planning operator $\textsc{Plan}$ and policy $\pi$ greedily with respect to this distance.
At the most basic level, a distance $d: \gS \times \gS \to \R$ must be positive for all inputs $(s, s' \neq s)$ and zero for all inputs $(s, s)$ (nonnegativity).
We will denote the set of all distances as $\gD$:
\begin{equation}
    \gD \triangleq \{d: \gS \times \gS \to \R : d(s, s) = 0, \, d(s, s') > 0 \text{ for each } s,s' \in \gS \text{ where } s \neq s'\}.
    \label{eq:metric_space_definition}
\end{equation}

A desirable property for distances to satisfy is the triangle inequality.
A distance satisfying this property is known as a \textit{quasimetric}, and we define the set of all quasimetric functions as
\begin{align}
    \gQ \triangleq \{d \in \gD : d(s, g) \le d(s, w) + d(w, g) \text{ for all } s, g, w \in \gS\}.
    \label{eq:quasimetric}
\end{align}

If we were to restrict the distances to be symmetric ($d(x,y)=d(y,x)$), our quasimetric would become a standard metric obeying nonnegativity, the triangle inequality, and symmetry.
However, we wish to use a quasimetric that allows for asymmetry over the interchange of the start and end states: the navigation task $s \rightarrow g$ may be completely different from $g \rightarrow s$, and the corresponding distance function should reflect this degree of freedom.

An important property of quasimetrics is that they are invariant to the \emph{path relaxation operator} from Dijkstra's algorithm.

\begin{definition}[Path relaxation operator]
    \label{def:path_relaxation_operator_fixed}
    Let $\textsc{Path}_{d}(s,g)$ be the path relaxation operator over quasimetric $d(s,g)$. For any triplet of states $(s,w,g) \in \gS \times \gS \times \gS$,
    \begin{equation}
        d(s,g) \leftarrow \textsc{Path}_{d}(s,g) \triangleq \min_w d(s, w) + d(w, g).
    \end{equation}
\end{definition}
Thus, invariance to the path relaxation operator is a form of self-consistency. Any triplet of distance predictions should satisfy the following property:
\[
    d(s, g) \leq d(s, w) + d(w, g)
\]
which is the familiar triangle inequality. Quasimetrics naturally satisfy this property and, combined with nonnegativity conditions, are invariant under the path relaxation operator.

\paragraph{Successor distances (a quasimetric).}
A particular quasimetric of note here is the  \textit{successor state distance} \citep{myers2024learning}, $\dsd$, defined as \begin{align}
    \dsd(s, g) \triangleq \min_{\pi} \biggl[\log \frac{\ppiy(\rs_K = g  \mid  \rs_0 = g)}{\ppiy(\rs_K = g  \mid  \rs_0 = s)}\biggr], \text{ where } K \sim \text{Geom}(1-\gamma).
    \label{eq:successor_distance}
\end{align}
The successor distance $\dsd$ is a compelling choice of distance because minimizing the distance to the goal  $\dsd(s,g)$ corresponds to optimal goal reaching with a discount factor $\gamma$.\footnote{Formally, define an MDP with the goal-conditioned reward function $r_{g}(s) = \delta_{(s,g)}$, a Kronecker delta function which evaluates to 1 if $s=g$ and 0 otherwise.
The $\dsd$-minimizing policy is optimal for this MDP.}
The related \textit{successor distance with actions} $\dsd(s,a,g)$ \citep{myers2024learning} allows us to optimize this distance over actions:
\begin{align}
    \dsd(s, a, g) \triangleq \min_{\pi} \biggl[\log \frac{\ppiy(\rs_K = g  \mid  \rs_0 = g)}{\ppiy(\rs_K = g  \mid  \rs_0 = s, a)}\biggr], \text{ where } K \sim \text{Geom}(1-\gamma).
    \label{eq:successor_distance_with_actions}
\end{align}

Given temporal distances between states and state-action pairs, we can define a \emph{quasimetric policy} that greedily selects actions with respect to $d(s,a,g)$:
\begin{definition}[Quasimetric policy]
    \label{def:policy_quasimetric}
    We define the quasimetric policy as some policy $\pi_{d}(a \mid s, g)$ where
    \[\pi_{d}(a\mid s, g) \in \argmin_{a \in \gA} d(s,a,g).\] Here, $d(s,a,g)$ is the successor distance with actions (\refas{Eq.}{eq:successor_distance_with_actions}).
\end{definition}

\section{Introducing and Analyzing Horizon Generalization}
\label{sec:analysis}

Equipped with quasimetric definitions, we begin by formally defining planning invariance and horizon generalization in deterministic and stochastic settings.
Then, we show that quasimetric policy $\pi_{d}(a \mid s,g)$ is planning invariant with respect to a planner defined over the same quasimetric.
Finally, we show that this invariance to planning implies horizon generalization.
Taken together, our analysis shows that horizon generalization exists and can be achieved by quasimetric methods.

\subsection{Definitions of Planning Invariance and Horizon Generalization}
\label{subsection:definitions}

To construct general definitions of planning invariance and horizon generalization, we will need to define a planning operator which proposes waypoints at a given state to reach a target distribution over goals.
What does it mean to plan over an input distribution of goals?
In nondeterministic settings, actions are optimal \emph{in expectation}.
Thus, accurate planners must be able to take in distributions over states and choose actions which induce that future state distribution.

We denote by
\begin{equation}
    \planclass \triangleq \{ \textsc{Plan} : \gS \times  \probspace(\gS) \mapsto \probspace(\gS)\}
    \label{eq:plan_function}
\end{equation}
the class of ``planning functions'' that given a state and goal distribution produces a distribution of possible waypoints.

In the special case of deterministic actions, waypoints, and goals, we write
\begin{equation}
    \planclass\fixed \triangleq \{ \textsc{Plan}\fixed : \gS \times \gS \mapsto \gS\} \subset \planclass.
\end{equation}

Our analysis in the rest of this section will focus on the simpler ``fixed'' setting of $\textsc{Plan}\fixed \in \planclass\fixed$.
We will use $w$ or $w_{\textsc{Plan}}$ to denote the waypoint produced by $\textsc{Plan}\fixed(s,g)$.
The proofs and quasimetric objects in the stochastic setting are slightly more complicated, but carry the same structure and takeaways as this simpler case; the general stochastic proofs and definitions are presented in \cref{app:proofs}. For notational brevity, we drop the label $\fixed$ in the rest of the analysis section.

There are several different types of planning algorithms one might consider (e.g., Dijkstra's algorithm~\citep{dijkstra1959note}, A\hspace*{-1.2pt}\raisebox{1pt}{*}~\citep{hart1968formal}, RRT~\citep{lavalle2001randomized}).
Importantly, the constraints of a quasimetric (see \cref{sec:prelims}) and the related idea of \emph{path relaxations} from Dijkstra's algorithm provide clues for specifying our planning operator later in our analysis.
We use this planning operator in one of our key definitions (visualized in \cref{fig:planning_invariance}):
\makerestatable
\begin{definition}[Planning invariance]
    \label{def:planning-invariance}
    Consider a deterministic MDP with states $\gS$, actions $\gA$, and goal-conditioned Kronecker delta reward function $r_{g}(s) = \delta_{(s,g)}$.
    For any goal-conditioned policy $\pi(a \mid s, g)$ where $g \in \gS$, we say that $\pi(a \mid s, g)$ is invariant under planning operator $\textsc{Plan} \in \planclass$ if and only if
    \begin{equation}
        \pi(a \mid s, g) = \pi(a \mid s, w), \text{ where } w = \textsc{Plan}(s, g).
        \label{eq:planning_invariance_fixed}
    \end{equation}
            \end{definition}

Note that planning invariance says nothing about whether the planner is good or bad.
We will primarily be interested in invariance under the optimal planner with respect to some quasimetric $d(s,g)$. We denote the class of quasimetric planning functions over quasimetric $d(s,g)$ as
\begin{align*}
    \planclass_{d} \triangleq \{ \textsc{Plan} \in \planclass \mid d(s,\textsc{Plan}(s,g)) + d(\textsc{Plan}(s,g),g) & = d(s,g)  \text{ for all } (s, g)\in \gS \times \gS\}.
\end{align*}

Our second key definition is horizon generalization (see \cref{fig:horizon_generalization}):
\begin{definition}[Horizon generalization]
    Let finite thresholds $c > 0$ and quasimetric $d(s, g)$ over the start-goal space $\gS \times \gS$ be given.
    In the single-goal, controlled (``fixed") case, a policy $\pi(a\mid s,g)$ {\bf generalizes over the horizon} if optimality over nearby start-goal pairs $\mathcal{B}_{c} = \{(s,g) \in \mathcal{S \times S} \mid d(s,g) < c\}$ everywhere implies optimality over the entire state space $\mathcal{S}$. 
    \label{def:horizon_generalization}
\end{definition}

We highlight the key base case assumption: optimality over shorter trajectories that \textit{cover the entire desired state space} $\gS$ leads to horizon generalization.
We assume this base case holds \emph{everywhere} \-- without additional assumptions about the symmetries of the MDP, it is beyond the scope of this work to consider horizon generalization to completely unseen states.Rather, we analyze generalization to unseen, long-horizon $(s,g)$ state \textit{pairs}.

\subsection{Quasimetric Policies are (Nontrivially) Planning Invariant}
\label{sec:planning_invariance}
With these notions of planning invariance and horizon generalization in hand, we will consider nontrivial quasimetric planning algorithms $\textsc{Plan}_{d} \in \planclass_{d}$ that acquire a quasimetric $d(s, g)$ and output a single waypoint $w \in \gS$:
\begin{align}
    \textsc{Plan}_{d}(s, g) = w_{\textsc{Plan}} \in \argmin_{w \in \gS}d(s,w) + d(w,g).
\end{align}
We highlight that this planning algorithm takes the form of the path relaxation operator (\cref{def:path_relaxation_operator_fixed}).

By the triangle inequality, we have $d(s,w_{\textsc{Plan}}) + d(w_{\textsc{Plan}},g) = d(s,g)$. Our first result is that quasimetric policies $\pi_{d}$ are invariant under planning operator $\textsc{Plan}_{d}$. 
\makerestatable
\begin{theorem}[Quasimetric policies are invariant under $\textsc{Plan}_{d}$]
    \label{theorem:planning_invariance_fixed}
    Given a deterministic MDP with states $\gS$, actions $\gA$, and goal-conditioned Kronecker delta reward function $r_{g}(s) = \delta_{(s,g)}$, define quasimetric policy $\pi_{d}(a\mid s,g)$ and quasimetric planner class $ \planclass_{d}$. Then, for every quasimetric planner $\textsc{Plan}_{d} \in \planclass_{d}$, there always exists a policy $\pi_{d}(a\mid s,g)$ that is planning invariant:
    \begin{equation}
        \pi_{d}(a\mid s,g) = \pi_{d}\bigl(a\mid s,w \text{ for } w= \textsc{Plan}_{d}(s,g)\bigr).
    \end{equation}
    \end{theorem}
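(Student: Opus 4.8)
The plan is to show that the greedy actions for the planned waypoint $w = \textsc{Plan}_d(s,g)$ are always among the greedy actions for the far goal $g$, and then to exhibit a policy that selects the shared action. First I would record the geometric fact that the waypoint lies on a geodesic: since $\textsc{Plan}_d \in \planclass_d$, the defining condition of $\planclass_d$ together with the triangle inequality (\cref{eq:quasimetric}) gives $d(s,w) + d(w,g) = d(s,g)$. In particular $w$ is strictly closer to $s$ than $g$ is whenever $w \neq g$, because $d(w,g) > 0$ by nonnegativity.

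The key lemma I would establish is the nesting $\argmin_{a} d(s,a,w) \subseteq \argmin_{a} d(s,a,g)$. This rests on two facts about the successor distance with actions (\cref{eq:successor_distance_with_actions}): the consistency identity $\min_{a} d(s,a,g') = d(s,g')$ for every target $g'$ (the state-to-state distance is realized by the optimal first action), and the action-conditioned triangle inequality $d(s,a,g) \le d(s,a,w) + d(w,g)$ (routing through $w$ is one feasible continuation once $a$ is committed). Granting these, take any $a^{\star} \in \argmin_{a} d(s,a,w)$, so that $d(s,a^{\star},w) = d(s,w)$. Then
\[
    d(s,a^{\star},g) \;\le\; d(s,a^{\star},w) + d(w,g) \;=\; d(s,w) + d(w,g) \;=\; d(s,g) \;=\; \min_{a} d(s,a,g),
\]
which forces $a^{\star} \in \argmin_{a} d(s,a,g)$.

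With the nesting in hand, I would construct a planning-invariant $\pi_d$ by insisting that, for each fixed start $s$, the action assigned to a pair $(s,g)$ agree with the action assigned to its waypoint pair $(s,w)$. The lemma makes this self-consistent: the action chosen at the strictly-closer waypoint already lies in $\argmin_{a} d(s,a,g)$, so propagating it back to $(s,g)$ keeps $\pi_d$ a valid quasimetric policy while enforcing $\pi_d(a\mid s,g) = \pi_d(a\mid s,w)$. Existence of the minimizers follows from compactness of $\gA$ and continuity of $d$; the point needing care is that this backward propagation (from near pairs to far pairs) be globally well-defined, which I would anchor using $d(s,w) < d(s,g)$ strictly for $w\neq g$ together with the trivial plans $w=g$, for which invariance is immediate.

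The main obstacle I anticipate is the lemma's two inputs, and especially the action-conditioned triangle inequality, which must be verified directly from the log-ratio-of-occupancies definition of $d(s,a,g)$ rather than assumed, since $\gQ$ in \cref{eq:quasimetric} only quantifies state-to-state distances. I also expect the reverse inclusion $\argmin_{a} d(s,a,g) \subseteq \argmin_{a} d(s,a,w)$ to fail in general, since an action optimal for $g$ may follow an alternative geodesic that bypasses $w$; this is precisely why the theorem asserts the \emph{existence} of an invariant policy rather than invariance of every quasimetric policy, and the one-directional construction above sidesteps the failure by always copying the waypoint's action.
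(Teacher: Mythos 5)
Your proposal is correct and follows essentially the same route as the paper's proof: the paper likewise establishes only the one-directional inclusion $\argmin_{a \in \gA} d(s,a,w_{\textsc{Plan}}) \subseteq \argmin_{a \in \gA} d(s,a,g)$ (by adding the constant $d(w_{\textsc{Plan}},g)$ to the objective, using $\min_a d(s,a,g') = d(s,g')$ together with the geodesic identity $d(s,w_{\textsc{Plan}})+d(w_{\textsc{Plan}},g)=d(s,g)$), and then obtains invariance by restricting the policy's choice among equivalently optimal actions to the waypoint's action. The two ingredients you flag as needing care---the consistency identity and the action-conditioned triangle inequality---are exactly the facts the paper also relies on, though it asserts them for the successor distance rather than verifying them, so your write-up is, if anything, more explicit about that dependence.
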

The proof is in \cref{sec:planning_invariance_exists_proof}.
In practice, we measure planning invariance by comparing the relative \emph{performance} of algorithms with and without planning.
For this condition, we do not necessarily need $\pi_{d}(a \mid  s,g) = \pi_{d}(a \mid  s, w_{\textsc{Plan}})$; rather, the weaker condition $d(s,\pi_{d}(a \mid  s,g),g) = d(s,\pi_{d}(a \mid  s, w_{\textsc{Plan}}),w_{\textsc{Plan}})$ is sufficient and necessary for planning invariance when there are no errors from function approximation and noise.
We extend this result to stochastic settings in \cref{sec:planning_invariance_stochastic_setting}.

\subsection{Quasimetric Policies Generalize over the Horizon}
\label{sec:horizon_generalization}
Our main result of this section is to prove that horizon generalization exists. We will do this via quasimetric policies using induction, where the inductive step invokes planning invariance.

\makerestatable
\begin{theorem}[Horizon generalization exists]
    \label{theorem:horizon_generalization}
        Consider a deterministic goal-conditioned MDP with states $\gS$, actions $\gA$, and goal-conditioned Kronecker delta reward function $r_{g}(s) = \delta_{(s,g)}$ where there are no states outside of $\gS$.
    Let finite thresholds $c>0$ and quasimetrics $d(s,g)$ over the start-goal space $\gS \times \gS$ be given.
    Then, a quasimetric policy $\pi_{d}(a\mid s,g)$ that is optimal over $\cB_{c} = \{(s,g) \in \mathcal{S \times S} \mid d(s,g) < c\}$ is optimal over the entire start-goal space $\cS \times \cS$.
\end{theorem}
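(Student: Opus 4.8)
The plan is to reduce optimality over a start--goal pair to the selection of an optimal first action at that pair, and then to establish action-optimality at every pair by strong induction over the doubling thresholds $c, 2c, 4c, \dots$. Compactness of $\gS$ makes the quasimetric diameter $D \triangleq \sup_{s,g} d(s,g)$ finite, so after $\lceil \log_2(D/c)\rceil$ levels the region $\{(s,g) : d(s,g) < 2^{k}c\}$ exhausts $\gS\times\gS$; once the optimal action is selected at every pair, these choices concatenate into a shortest-path trajectory from any $s$ to any $g$, which is what optimality over a pair means. The base case $k=0$ is precisely the hypothesis that $\pi_d$ is optimal on $\cB_{c}$.

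For the inductive step I would assume action-optimality at every pair with distance below $2^{k}c$ and fix an arbitrary pair $(s,g)$ with $d(s,g) < 2^{k+1}c$. The decisive move is to insert a waypoint $w$ on a geodesic from $s$ to $g$, taken to be a midpoint with $d(s,w) = d(w,g) = d(s,g)/2 < 2^{k}c$, so that \emph{both} sub-pairs $(s,w)$ and $(w,g)$ sit at the strictly lower level $k$. Invoking planning invariance (\cref{theorem:planning_invariance_fixed}) for a planner in $\planclass_{d}$ that returns this $w$ gives $\pi_d(a\mid s,g) = \pi_d(a\mid s,w) =: a^\star$, and the inductive hypothesis applied to $(s,w)$ says $a^\star$ is an optimal first action for reaching $w$: writing $s' = f(s,a^\star)$ for the (deterministic) successor, $d(s',w) = d(s,w) - d(s,s')$.

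It then remains to upgrade ``optimal toward $w$'' into ``optimal toward $g$,'' which is pure triangle inequality: combining $d(s',g) \le d(s',w) + d(w,g)$ with the geodesic identity $d(s,w) + d(w,g) = d(s,g)$ yields $d(s',g) \le d(s,g) - d(s,s')$, while the reverse bound $d(s',g) \ge d(s,g) - d(s,s')$ is immediate from $d(s,g) \le d(s,s') + d(s',g)$. Hence $a^\star$ is an optimal first action toward $g$, completing the step. Because the argument for $(s,g)$ only ever calls the hypothesis at the lower level $k$ (through $(s,w)$), there is no circularity with the as-yet-unresolved level-$(k+1)$ pairs.

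The step I expect to be the main obstacle is guaranteeing the midpoint $w$ exists. Planning invariance hands over the action equality for free and the triangle-inequality upgrade is routine, but producing a geodesic point with $d(s,w) = d(s,g)/2$ requires the state space to be rich enough along optimal paths --- a geodesic (length-space) property that the bare quasimetric axioms of \cref{eq:quasimetric} do not imply. I would isolate this as an explicit assumption, using compactness together with the stipulation that no states lie outside $\gS$ (so geodesics remain in the domain), and I would verify $w \neq s$ so that $(s,w)$ is a genuine lower-level pair. The remaining bookkeeping --- finiteness of the number of doublings and the reassembly of per-pair action-optimality into full trajectory optimality --- is then routine.
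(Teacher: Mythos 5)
Your proposal follows essentially the same route as the paper's proof (\cref{sec:horizon_generalization_stochastic_setting}): induction on doubling balls $\{(s,g) : d(s,g) < c2^{n}\}$, with planning invariance (\cref{theorem:planning_invariance_fixed}) supplying the inductive step once a waypoint achieving triangle equality is placed inside the smaller ball; the only substantive differences are that the paper works in the general distributional setting (goals $G \in \gP(\gS)$, recovering the deterministic case via Dirac deltas) and restricts the $\argmin$ waypoint to the smaller ball $\gD_{n}(s)$ rather than demanding an exact midpoint.

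One remark on the obstacle you flag: the existence of an intermediate waypoint inside the smaller ball is not resolved by the paper either. In the paper's chain of equalities, the step replacing $d(s,a,W_{\textsc{out}})$ by $\min_{W_{\textsc{in}} \in \gD_{n}(s)} \bigl(d(s,a,W_{\textsc{in}}) + d(W_{\textsc{in}}, W_{\textsc{out}})\bigr)$ only holds with ``$\geq$'' under the bare quasimetric axioms; equality is exactly the length-space/geodesic richness property you propose to isolate as an assumption (it does hold for, e.g., successor distances in MDPs where optimal paths pass through intermediate states, but not for an arbitrary $d \in \gQ$). So your explicit flagging of this assumption is, if anything, more careful than the paper's implicit use of it, and your weaker requirement (any $w$ with $d(s,w), d(w,g) < 2^{k}c$ and $d(s,w)+d(w,g)=d(s,g)$, not necessarily an exact midpoint) is the right thing to assume.
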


The idea of the proof is to begin with a ball of states $\gD_{c}(s) = \{s' \in \gS \mid  d(s,s') < c\}$ for some arbitrary state $s$. We will use a proof by induction. As the base case, we assume policy $\pi_{d}(a\mid s,\cdot)$ is optimal over $\gD_{c}(s)$ for all $s \in \gS$.
For the inductive step, we use planning invariance and the triangle inequality to show that policy optimality over $\gD_{n}(s) = \{s' \in \gS \mid  d(s,s') < c2^{n}\}$ implies optimality over $\gD_{n+1}(s)$, a ball with double the radius.

This proof shows that there exist planning-invariant goal-reaching policies that generalize over the horizon: optimality over pairs of close states \emph{everywhere} implies optimality over arbitrarily distant pairs of states. The complete proof, extended to stochastic settings and thus applicable to the fixed setting, is in \cref{sec:horizon_generalization_stochastic_setting}.

Importantly, if the base case does not hold everywhere (i.e. there exist states beyond $\gS$ that are optimal waypoints or goals), then the policy will not exhibit global optimality; considering generalization to completely unseen states and waypoints is beyond the scope of this paper.

Finally, horizon generalization is not guaranteed for a goal-reaching policy that is \emph{not} planning invariant:
\makerestatable
\begin{remark}[Horizon generalization is nontrivial]
    \label{remark:horizon_generalization_nontrivial}
    Let finite $c >0$ and goal-conditioned MDP with states $\gS$, actions $\gA$, and goal-conditioned Kronecker delta reward function $r_{g}(s) = \delta_{(s,g)}$ be given where there are no states outside of $\gS$. For a policy that is not planning invariant, optimality over $\mathcal{B}_{c} = \{(s,g) \in \mathcal{S \times S} \mid d(s,g) < c\}$ is not a sufficient condition for optimality over the entire start-goal space $\gS \times \gS$.
\end{remark}

To prove this remark, we construct non-planning invariant policies that are optimal over horizon $H$ but suboptimal over horizon $H+1$. The complete proof is in \cref{proof:horizon_generalization_nontrivial}.

Combined, these results show that planning invariance and horizon generalization, as defined in \cref{subsection:definitions}, exist in nontrivial forms via quasimetric policies.

\subsection{Limitations and Assumptions}
\pagedepth\maxdimen
\label{sec:limitations}
\begin{wrapfigure}{R}{0.5\linewidth}
    \vspace*{-4ex}
    \includegraphics[width=\linewidth]{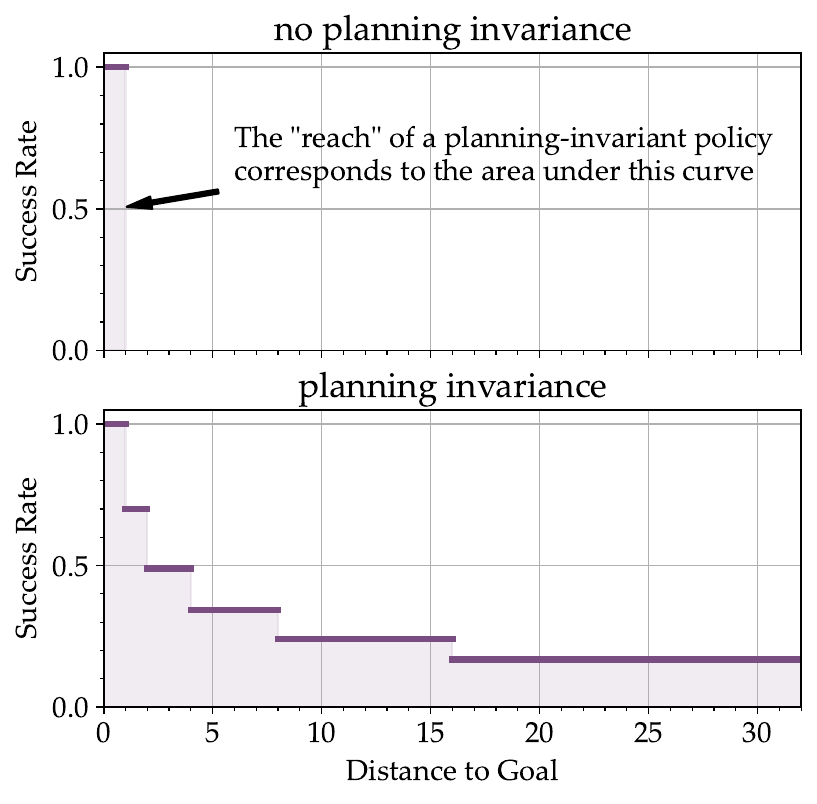}
    \caption{\textbf{\emph{Approximate} horizon generalization is still useful.} $\textsc{Success}$ when there is horizon generalization. When the success attenuation factor $\eta \geq 0.5$, the $\textsc{Reach}$ goes to $\infty$. For a policy with no horizon generalization ($\eta = 0$), its $\textsc{Reach} = 1$.
        \label{fig:reach}}
    \end{wrapfigure}

Despite our theoretical results proving that horizon generalization exists, we expect that practical algorithms will not \emph{perfectly} achieve horizon generalization. This section highlights the assumptions that belie our key results, and our experiments in \cref{sec:experiments} will empirically study the degree to which current methods achieve these properties.

An important assumption in our inductive proof is that horizon generalization exists as a binary category.
However, in practical algorithms, horizon generalization likely exists on a spectrum.
As such, each application of the inductive argument will incur some error, such that the argument (and, hence, the degree of generalization) will not extend infinitely.

To make this more concrete, define $\textsc{Success}(c)$ as the success rate for reaching goals in radius $c$, and assume that we choose constant $c_0$ small enough that $\textsc{Success}(c_0) = 1$.
Then, let us assume that each time the horizon is doubled ($c_0 \rightarrow 2c_0 \rightarrow 4c_{0} \rightarrow \cdots$), the success rate decreases by a factor of $\eta$.
We will refer to $\eta$ as the {\bf horizon generalization parameter} and later measure this parameter in our experiments (\cref{sec:experiments}).
In addition, we assume that $\textsc{Success}(c)$ is monotonically decreasing; goals further in time should be harder to reach.
We can now define the $\textsc{Reach}$ as the sum of $\textsc{Success}(c)$ over $c \geq c_0$. With the above constraints on $\textsc{Success}(c)$, in the worst case,
\begin{equation}
    \textsc{Reach}_{wc}  = 1 + \eta (2 - 1) + \eta^2 (4 - 2) + \eta^3 (8 - 4) + \cdots
    = \begin{cases} 1 + \eta \frac{1}{1 - 2 \eta} & \text{if } 0 < \eta < 1/2 \\
              \infty                        & \text{if } \eta \ge 1/2\end{cases}.
    \label{eq:reach}
\end{equation}
When there is no horizon generalization, the Reach is 1. We can see this by integrating the Success curve in Fig. $\ref{fig:reach}$, top. When the degree of horizon generalization has a low value of (say) $\eta = 0.1$ (i.e., it generalizes for only 1 out of every 10 goals), the Reach is 1.125, not much bigger than that of a policy without horizon generalization.
Once the degree of horizon generalization reaches $\eta = 1/2$ (i.e., generalizes for 1 out of every two goals), the Reach is infinite. In short, the potential reach of horizon generalization is infinite, even when each step of the recursive argument incurs a non-negligible degree of error.

A second important assumption behind our analysis is that the base case holds \emph{everywhere}: the policy must succeed at reaching \emph{all} nearby goals when initialized at \emph{all} possible starting states. In practice, this may translate to a coverage assumption on the training data.
If the base case does not hold (poor performance on easy goals) but planning invariance holds, then we should not expect to see optimality over arbitrarily hard goals. We will observe this empirically with a random policy in our experiments (\cref{fig:s-exps-full}): a random policy is invariant to planning (it always selects random actions, regardless of the goal) yet its performance on nearby goals is mediocre, so the policy fails to exhibit horizon generalization.

Finally, invariance under any arbitrary planner does not guarantee horizon generalization. Indeed, our \cref{theorem:horizon_generalization} states invariance under a planner \emph{that minimizes an asymmetric distance} (quasimetric) leads to horizon generalization. Thus, planning and invariance to planning with respect to, say, an arbitrary reward function does not necessarily lead to horizon generalization, even if a policy is optimal within short horizons.

Nonetheless, planning invariance remains an alluring property for three reasons: (1) planning-invariant policies potentially automatically get the benefits of planning, (2) optimal policies are invariant under optimal planners, and, as we show in our analysis, (3) invariance to planners that shorten quasimetric distances leads to horizon generalization (\cref{theorem:horizon_generalization}). In light of our analysis, we discuss methods for planning invariance in the next section.

\subsection{Which Practical Methods Might Exhibit Horizon Generalization?}
\label{sec:current-methods}

In this section, we discuss how temporal difference methods, quasimetric architectures, RL algorithms, and data augmentations that employ explicit planning can all achieve planning invariance under some assumptions. \Cref{sec:new-methods} discusses several new directions for designing RL algorithms that are invariant to planning.
\Cref{sec:prior-evidence} recalls figures from prior works in search of evidence for horizon generalization.

\paragraph{Dynamic programming and temporal difference (TD) learning.}
We expect that dynamic programming and TD methods will achieve planning invariance in tabular settings. The intuition is that TD methods ``stitch''~\citep{ziebart2008maximum} together trajectories which is a natural route to obtain policies with horizon generalization. Indeed, our definition of planning invariance is very closely tied with the optimal substructure property~\citep[pp.~382-387]{cormen2022introduction} of dynamic programming \emph{problems}, and likely could be redefined entirely in terms of optimal substructure.
Viewing horizon generalization and planning invariance through the lens of machine learning allows us to consider a broader set of tools for achieving invariance and generalization (e.g., special neural network layers, data augmentation).

\begin{table}[htb!]
    \centering

    \captionsetup{font=normalsize}
    \caption{Summary of methods and modifications tested}\label{tab:methods}

    \tabcolsep=4pt
    \begin{tabular}{ll|ll}
        \toprule
        \multicolumn{1}{l}{\textbf{Method}} & \multicolumn{1}{l}{\textbf{Description}}                  & \multicolumn{1}{|c}{\textbf{Losses}}                                             & \multicolumn{1}{c}{\textbf{Critics}} \\
        \midrule
        CRL                                 & Contrastive RL~\citep{eysenbach2022contrastive}           & $\{\mathcal{L}_{\text{fwd}},\mathcal{L}_{\text{bwd}},\mathcal{L}_{\text{sym}}\}$ & $\{d_{\ell_2}, d_{\text{MLP}}\}$     \\
        SAC                                 & Soft Actor-Critic~\citep{haarnoja2018soft}                & $\{\mathcal{L}_{\text{sac}}\}$                                                   & $\{Q_{\text{MLP}}\}$                 \\
        CMD-1                               & Contrastive metric distillation~\citep{myers2024learning} & $\{\mathcal{L}_{\text{bwd}}\}$                                                   & $\{d_{\text{MRN}}\}$                 \\
        \bottomrule
    \end{tabular}
    \par\medskip

    \begin{subtable}[t]{0.5\linewidth}
        \centering
        \captionsetup{font=normalsize}
        \caption{Losses}
        \begin{tabular}{lp{5.1cm}l}
            \toprule
                        $\mathcal{L}_{\text{fwd}}$ & InfoNCE loss: predict goal $g$ from current state-action $(s,a)$ pair \citep{sohn2016improved}                            \\
            [1cm]
            $\mathcal{L}_{\text{bwd}}$ & Backward InfoNCE loss: predict current state and action $(s,a)$ from future state $g$ \citep{bortkiewicz2024accelerating} \\
            [1cm]
            $\mathcal{L}_{\text{sym}}$ & Symmetric contrastive loss: combine the forward and backward contrastive losses \citep{radford2021learning}               \\
            [1cm]
            $\mathcal{L}_{\text{sac}}$ & Temporal difference loss \citep{haarnoja2018soft}                                                                         \\
            \bottomrule
        \end{tabular}
    \end{subtable}\hfill\begin{subtable}[t]{0.5\linewidth}
        \centering
        \captionsetup{font=normalsize}
        \caption{Architectures}
        \begin{tabular}{lp{5.6cm}}
            \toprule
            $d_{\ell_2}$     & {\raggedright L2-distance parameterized architecture, uses $\|\phi(s)-\psi(g)\|$ as a distance/critic \citep{eysenbach2024inference}} \\
            [1cm]
            $d_{\text{MLP}}$ & {\raggedright Uses multi-layer perceptron (MLP) to parameterize the distance/critic \citep{rosenblatt1961principles,burr1986neural}}  \\
            [1cm]
            $d_{\text{MRN}}$ & {\raggedright Metric residual network, uses a quasimetric architecture to parameterize the distance/critic \citep{liu2023metric}}     \\
            [1cm]
            $Q_{\text{MLP}}$ & {\raggedright MLP-parameterized Q-function \citep{haarnoja2018soft}}                                                                  \\
            \bottomrule
        \end{tabular}
    \end{subtable}
    \hfill
\end{table}

\paragraph{Quasimetric Architectures (implicit planning).}
Prior methods that employ special neural networks may have some degree of horizon generalization.
For example, some prior methods~\citep{wang2023optimal, pitis2020inductive, myers2024learning} use quasimetric networks to represent a distance function. As the correct distance function satisfies the triangle inequality, it is useful to employ special quasimetric neural network architectures ~\citep{liu2023metric,wang2022learning,wang2022improved} that are guaranteed to satisfy the same property before seeing any training data.

However, prior work rarely examines the generalization or invariance properties of these quasimetric architectures.
One way of thinking about quasimetric architectures is that they are invariant to path relaxation ($d(s, g) \gets \min_w d(s, w) + d (w, g)$ for any triplet $s,w,g$)~\citep[p.~609]{cormen2022introduction}, an operation that enforces a form of self-consistency through the triangle inequality (see \cref{def:path_relaxation_operator_fixed}). Path relaxation is exactly the notion of planning used in our theoretical construction (\cref{theorem:planning_invariance_fixed} and \cref{theorem:horizon_generalization}).
Thus, these architectures are, by construction, invariant to planning and theoretically result in horizon generalization. We use these architectures in our experiments in \cref{sec:experiments}.

While quasimetric architectures are invariant to path relaxation, other prior methods~\citep{tamar2016value, lee2018gated} have proposed architectures that perform value iteration internally and (hence) may be invariant to the Bellman operator. Because Bellman optimality implies invariance to optimal planning (c.f. optimal substructure), we expect that these value iteration networks may exhibit some degree of horizon generalization as well.\looseness=-1

\paragraph{Explicit planning methods.}
While our proof of planning used a specific notion of planning, prior work has proposed RL methods that employ many different styles of planning: graph search methods~\citep{savinov2018semi, zhang2021c, beker2022palmer, chane2021goal}, model-based methods~\citep{sutton1991dyna, chua2018deep, nagabandi2018neural, lowrey2018plan, williams2017information}, collocation methods~\citep{rybkin2021model}, and hierarchical methods~\citep{kulkarni2016hierarchical, parascandolo2020divide, nasiriany2019planning, pertsch2020keyframing}. Insofar as these methods approximate the method used in our proof, it is reasonable to expect that they may achieve some degree of planning invariance and horizon generalization (see \cref{fig:prior-evidence}).
Prior methods in this space are typically evaluated on the \emph{training} distribution, so their horizon generalization capabilities are typically not evaluated. However, the improved generalization properties might have still contributed to the faster learning on the \emph{training} tasks: after just learning the easier tasks, these methods would have already solved the complex tasks, leading to higher average success rates.

\paragraph{Data augmentation.}
Finally, prior work~\citep{ghugare2024closing, chane2021goal} has argued that data augmentation provides another avenue for achieving the benefits typically associated with planning or dynamic programming.

\section{Experiments}
\label{sec:experiments}

\begin{figure}[htb]
    \centering
    \def\figbox#1{\vbox{\hsize=.5\linewidth\vss\parbox{.5\linewidth}{\hfill #1\hfill\hfill}\vss\vspace*{1ex}}}
    \valign{
        \figbox{#} & \figbox{#} \cr
        \includegraphics[width=.9\linewidth]{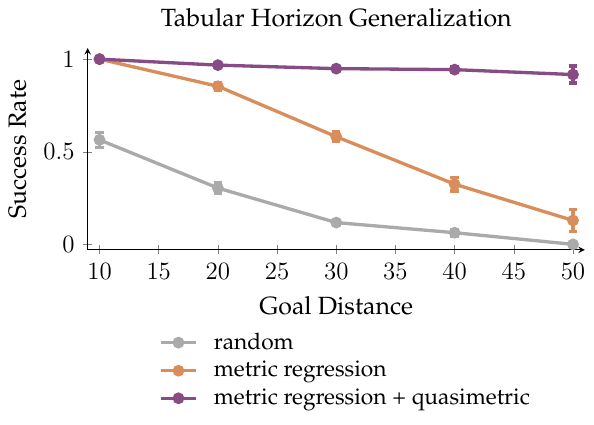}
        &
        \includegraphics[width=.92\linewidth]{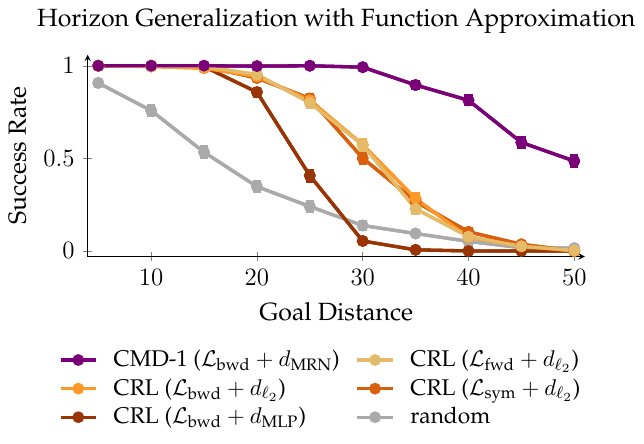}\hspace*{-2ex}
        \cr
        \includegraphics[width=.85\linewidth]{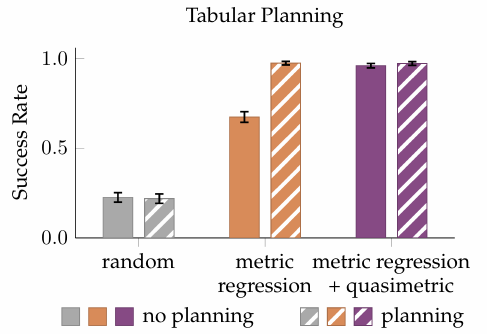}
        &
        \includegraphics[width=.85\linewidth]{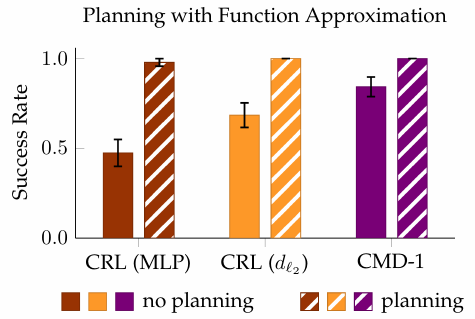}
        \cr
    }
    \caption{\textbf{Quantifying horizon generalization and invariance to planning.}
        On a simple navigation task, we collect short trajectories and train two goal-conditioned policies,
        comparing both to a random policy.
        \emph{(Top Left)} We evaluate on $(s, g)$ pairs of varying distances, observing that metric regression with a quasimetric exhibits strong horizon generalization. \emph{(Top Right)} In line with our analysis, the policy that has strong horizon generalization is also more invariant to planning: combining that policy with planning does not increase performance.
        \emph{(Bottom Row)} We repeat these experiments using function approximation (instead of a tabular model), observing similar trends.
        }
    \label{fig:s-exps-full}
\end{figure}

The aim of our experiments is to provide intuition into what horizon generalization and planning invariance are, why it should be possible to achieve these properties, and to study the extent to which existing methods already achieve these properties. We also present an experiment highlighting why horizon generalization is a useful notion even when considering temporal difference methods (\cref{sec:bellman}).

\begin{figure}[htb]
    \centering    \setbox0=\hbox{
        \begin{subfigure}[b]{0.64\linewidth}\centering\let\thesubfigure=b
            \caption{Success rates stratified by distance to goal}\vspace*{1ex}\includegraphics[width=\linewidth]{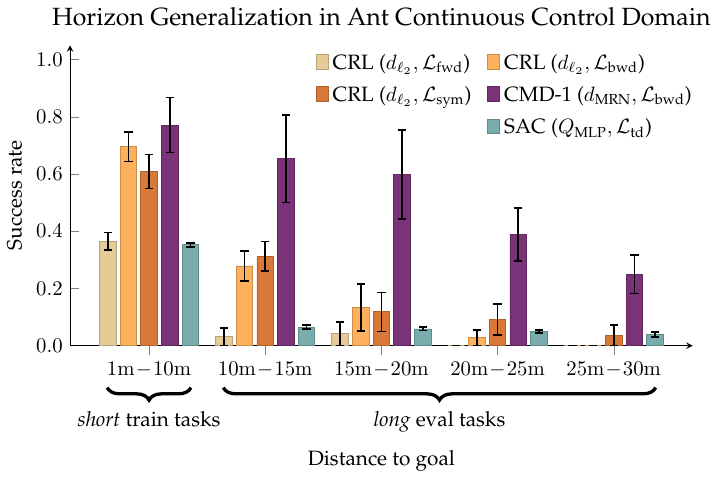}\end{subfigure}}\begin{subfigure}[b]{0.31\linewidth}\vbox to \ht0{\centering\let\thesubfigure=a
            \caption{Ant Environment}\vskip\stretch1
            \includegraphics[width=.97\linewidth]{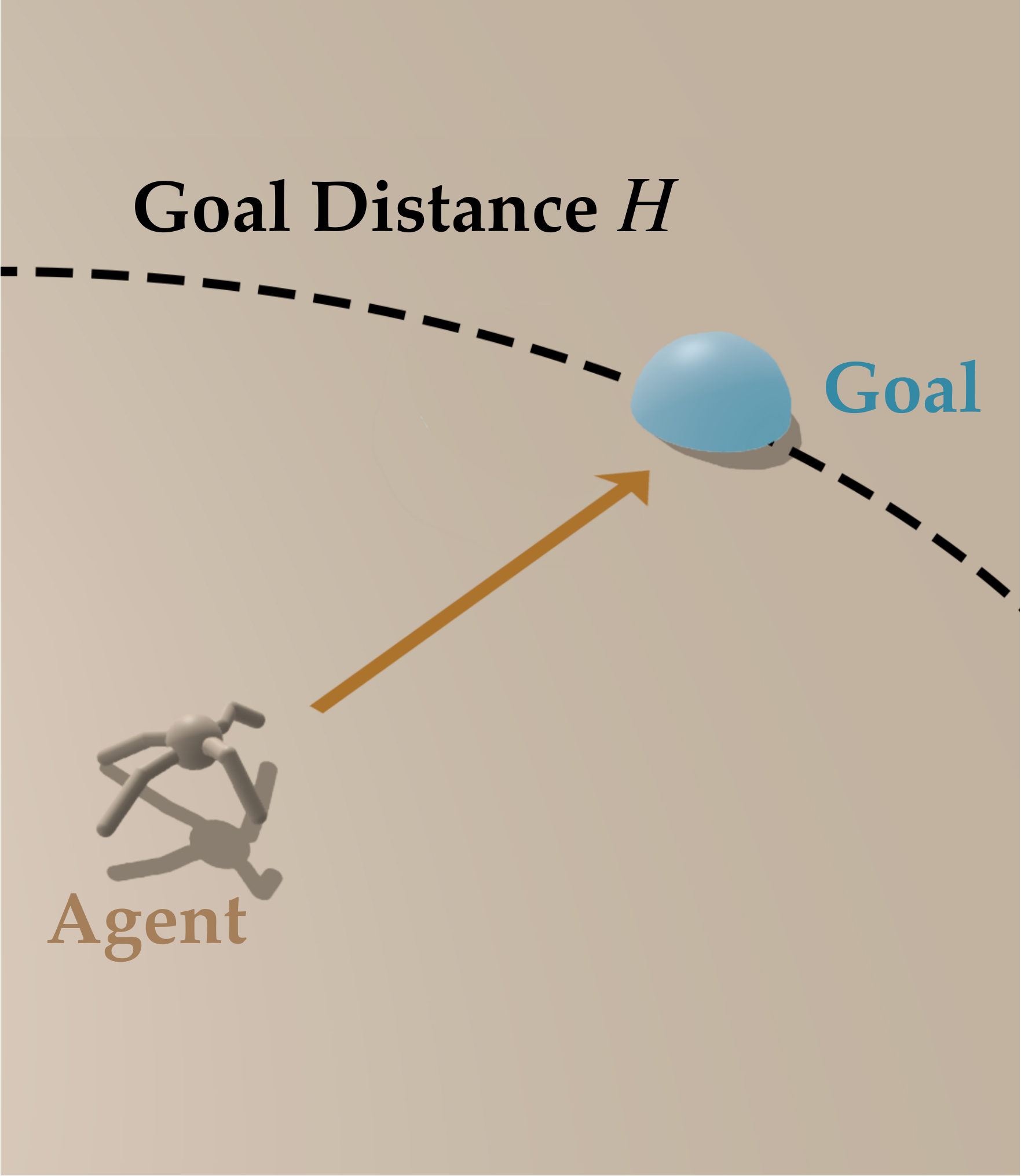}
            \vskip\stretch2
        }
    \end{subfigure}\hfill\unhbox0    \caption{\textbf{Measuring horizon generalization in a high-dimensional (27D observation, 8DoF control) task.}
        \figleft~We use an enlarged version of the quadruped ``ant'' environment, training all goal-conditioned RL methods on (start, goal) pairs that are at most 10 meters apart.
        \figright~We evaluate several RL methods, measuring the horizon generalization of each. These results reveal that \emph{(i)} some degree of horizon generalization is possible; \emph{(ii)} the learning algorithm influences the degree of generalization; \emph{(iii)} the value function architecture influences the degree of generalization; and \emph{(iv)} no method achieves perfect generalization, suggesting room for improvement in future work. The ratio of success at 10m vs 5m and 20m vs 10m corresponds to $\eta$ from \cref{sec:limitations}. Results are plotted with standard errors across random seeds.
    }
    \label{fig:ant}
\end{figure}

We start with a didactic, tabular navigation task (\cref{fig:s-env}), connecting short horizon trajectories and evaluating performance on long-horizon tasks.
In our first experiment, we measure the empirical average hitting time distance between all pairs of states. We define a policy that acts greedily with respect to these distances, measuring performance of this ``metric regression'' policy in \cref{fig:s-exps-full} \textit{(Top Left)}. The degree of horizon generalization can be quantified by comparing its success rate on nearby $(s, g)$ pairs to more distant pairs.
We compare to a ``metric regression with quasimetric'' method that projects the empirical hitting times \emph{into a quasimetric} by performing path relaxation updates until convergence ($d(s, g) \gets \min_w d(s, w) + d(w, g)$).
\cref{fig:s-exps-full} \emph{(Top Left)} shows that this policy achieves near perfect horizon generalization.
While this result makes intuitive sense (this algorithm is very similar to Dijkstra's algorithm), it nonetheless highlights one way in which a method trained on nearby start-goal pairs can generalize to more distant pairs.

\begin{figure}
    \centering

    \def\header#1{\scalebox{.85}{\footnotesize\textbf{#1}}}
    \def\highlight#1{\textcolor{text1}{\textbf{#1}}}

    \hbox{
        \hfill
        \begin{subfigure}[t]{0.5\linewidth}
            \centering
            \caption{AntMaze}
            \medskip
            \valign{\vfill\hbox{#}\vfill\vfill\cr
                \includegraphics[height=2.75cm]{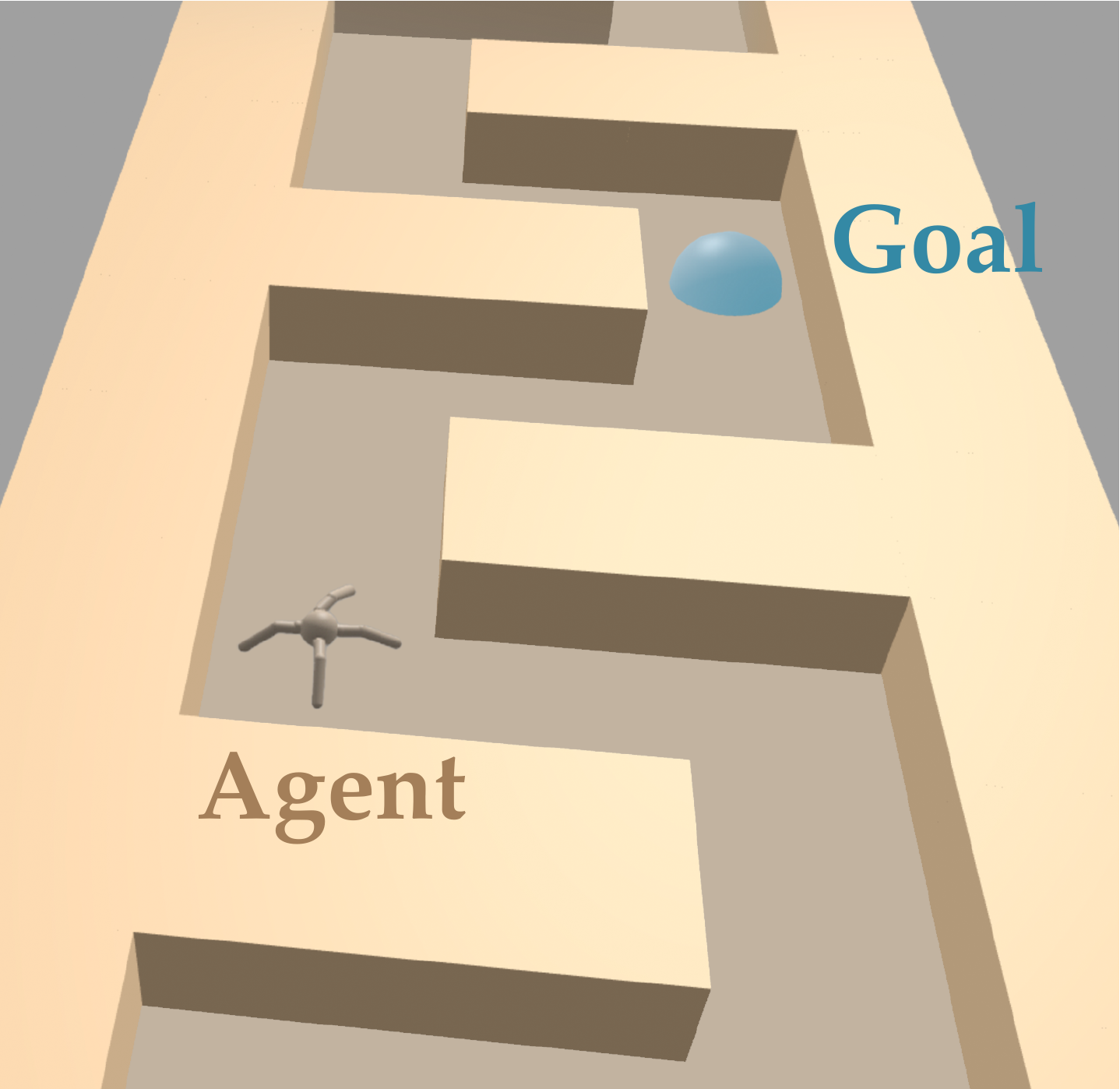}
                \cr
                \hspace*{1em}
                \footnotesize
                \tabcolsep=3pt
                \begin{tabular}{r|cc}
                    \multicolumn{1}{c}{} & \multicolumn{2}{c}{\raise2pt\hbox{$\eta$ value}}                    \\
                    \toprule
                    \header{Distance}    & \header{CMD}                                     & \header{CRL}     \\
                    \midrule
                    5m                   & \highlight{1.00}                                 & \highlight{1.00} \\
                    15m                  & \highlight{0.93}                                 & 0.10             \\
                    25m                  & \highlight{0.25}                                 & 0.00             \\
                    \bottomrule
                \end{tabular} \cr
            }
        \end{subfigure}
        \hfill
        \begin{subfigure}[t]{0.5\linewidth}
            \centering
            \caption{Humanoid}
            \medskip
            \valign{
                \vfill\hbox{#}\vfill\vfill\cr
                \includegraphics[height=2.75cm]{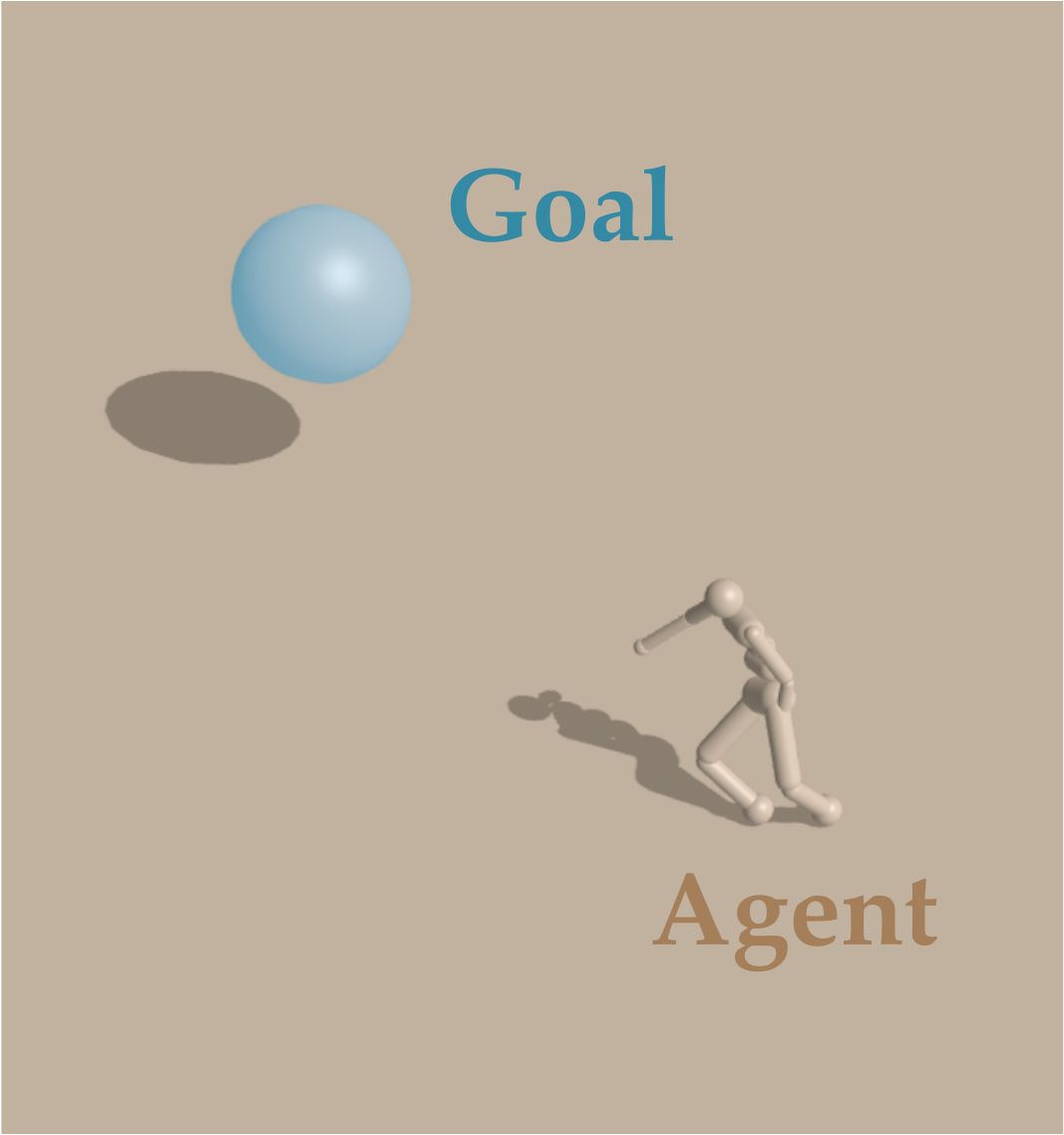}
                \cr
                \hspace*{1em}
                \footnotesize
                \tabcolsep=3pt
                \begin{tabular}{r|cc}
                    \multicolumn{1}{c}{} & \multicolumn{2}{c}{\raise2pt\hbox{$\eta$ value}}                    \\
                    \toprule
                    \header{Distance}    & \header{CMD}                                     & \header{CRL}     \\
                    \midrule
                    5m                   & \highlight{1.00}                                 & \highlight{1.00} \\
                    15m                  & \highlight{0.69}                                 & 0.60             \\
                    25m                  & \highlight{0.24}                                 & 0.16             \\
                    \bottomrule
                \end{tabular} \cr}
        \end{subfigure}
        \hfill
    }
    \caption{\textbf{Illustrating Horizon Invariance in Additional Environments.} \figleft~A large Ant maze environment with a winding \textsf{S}-shaped corridor.
        \figright~A humanoid environment with a complex, high-dimensional observation space.
        We evaluate the horizon generalization as measured by $\eta$ for a quasimetric architecture (CMD) and a standard architecture (CRL), quantifying the ratio of success rates when evaluating at 5m vs 10m, 15m vs 30m, and 25m vs 50m after training to reach goals within 10m.
        The largest $\eta$ values in each row are \highlight{highlighted}.
    }
    \label{fig:additional_envs}
\end{figure}

We study planning invariance of these policies by comparing the success rate of each policy (on distant start-goal pairs) when the policy is conditioned on the goal versus on a waypoint. See \cref{app:details} for details. As shown in \cref{fig:s-exps-full} \emph{(Top Right)}, the ``metric regression with quasimetric'' policy exhibits stronger planning invariance, supporting our theoretical claim that (\cref{theorem:planning_invariance_fixed}) planning invariance is possible.

We next study whether these properties exist when using function approximation. For this experiment, we adopt the contrastive RL method~\citep{eysenbach2022contrastive} for estimating the distances, comparing different architectures and loss functions.
The results in \cref{fig:s-exps-full} \emph{(Bottom Left)} show that both the architecture and the loss function can influence horizon generalization, with the strongest generalization being achieved by a CMD-1~\citep{myers2024learning}. Intuitively this makes sense, as this method was explicitly designed to exploit the triangle inequality, which is closely linked to planning invariance. \cref{fig:s-exps-full} \emph{(Bottom Right)} shows the degree of planning invariance for these policies. Supporting our analysis, the policy most invariant to planning trained over short horizon tasks shows the strongest horizon generalization.

\begin{wrapfigure}{R}{0.6\linewidth}
    \centering
    \includegraphics[width=\linewidth]{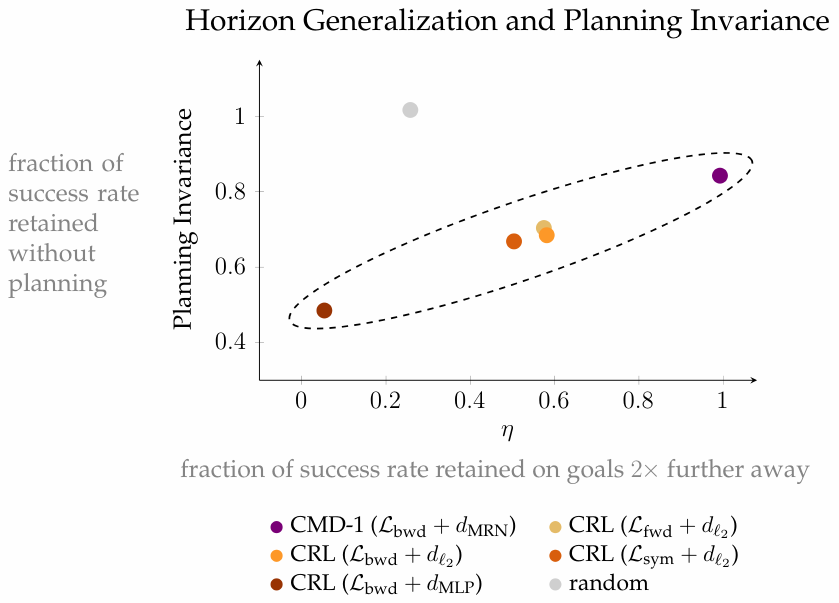}
    \caption{Quantifying horizon generalization ($x$-axis) and planning invariance ($y$-axis).}     \label{fig:eta}
\end{wrapfigure}

To better understand the relationship between planning invariance and horizon generalization, we used the data from \cref{fig:s-exps-full} \emph{(Bottom Left)} to estimate the horizon generalization parameter $\eta$ (see \cref{sec:limitations}), and used the data from the \emph{(Bottom Right)} to compute the ratio of performance with and without planning.
\cref{fig:eta} shows these data as a scatter plot.

These two quantities are well correlated, supporting \cref{theorem:horizon_generalization}'s claim that horizon generalization is closely linked to planning invariance.
Methods that use an L2-distance parameterized architecture showed stronger horizon generalization and planning invariance than that which uses an MLP, suggesting that some degree of planning invariance is possible even without a quasimetric architecture.
Intriguingly, these methods using the L2 architecture have a value of $\eta\approx 0.5$, right at the critical point between bounded and unbounded reach (see \cref{sec:limitations}).

The CMD-1 method, which is explicitly designed to incorporate the triangle inequality, exhibits much stronger planning invariance and horizon generalization ($\eta \approx 0.8 \gg 0.5$), well above the critical point.
Finally, note that the random policy is an outlier: it achieves perfect planning invariance (it always takes random actions, regardless of the goal) yet poor horizon generalization. This random policy highlights a key assumption in our analysis: that the policy \emph{always} succeeds at reaching nearby goals (in \cref{fig:s-exps-full}, note that the success rate on the easiest goals is strictly less than 1).

\subsection{Studying Horizon Generalization in a High-dimensional Setting}
\label{sec:high-dim}

Our next set of experiments study horizon generalization and planning invariance in the context of a high-dimensional quadrupedal locomotion task (see \cref{fig:ant}).
We start by running a series of experiments to compare the horizon generalization of different learning algorithms (CRL~\citep{eysenbach2022contrastive} and SAC~\citep{haarnoja2018soft}) and distance metric architectures (details in \cref{app:details}).
The results in \cref{fig:ant} highlight that both the learning algorithm and the architecture can play an important role in horizon generalization, while also underscoring that achieving high horizon generalization in high-dimensional settings remains an open problem.
See \cref{tab:methods} for a summary of the methods used in these experiments.

These trends hold in more complex environments as well: \cref{fig:additional_envs} shows greater horizon generalization (as measured by the $\eta$-value defined in \cref{sec:limitations}) for a CMD-1 architecture compared to a CRL architecture in both an AntMaze and a Humanoid environment.

\subsection{Impact of Horizon Generalization on Bellman Errors}
\label{sec:bellman}
\begin{figure}
    \centering
    \parbox{.75\linewidth}{
        \centering
        \includegraphics[width=0.9\linewidth]{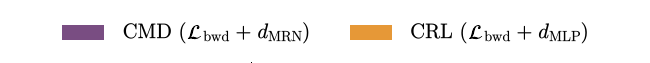}
        \includegraphics[width=\linewidth]{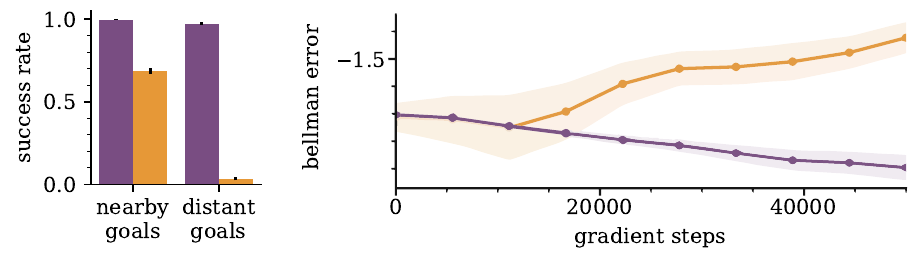}
    }
    \caption{\textbf{Impact of horizon generalization on Bellman errors.}
        \figleft~Two goal-reaching methods exhibit different horizon generalization.
        \figright~Despite neither method being trained with the Bellman loss, we observe that the method with stronger horizon generalization has a lower Bellman loss. Thus, understanding horizon generalization may be important even when using TD methods (which guarantee horizon generalization at convergence).}
    \label{fig:bellman}
    \end{figure}
Why should someone using a temporal difference method care about horizon generalization, if TD methods are supposed to provide this property for free?
One hypothesis is that methods for achieving horizon generalization will also help decrease the Bellman error, especially for unseen start-goal pairs.
We test this hypothesis by measuring the Bellman error throughout training of the contrastive RL method (same method as \cref{fig:s-exps-full}), with two different architectures.
The results in \cref{fig:bellman} show that the architecture that exhibits stronger horizon generalization ($d_{\ell_2}$) also has a lower Bellman error throughout training.
Thus, while TD methods may achieve horizon generalization at convergence (at least in the tabular setting with infinite data), a stronger understanding of horizon generalization may nonetheless prove useful for designing architectures that enable faster convergence of TD methods.

\section{Conclusion}
The aim of this paper is to give a name to a type of generalization that has been observed before, but (to the best of our knowledge) has never been studied in its own right: the capacity to generalize from nearby start-goal pairs to distant goals.
Seen from one perspective, this property is trivial\--it is an application of the optimal substructure property, and the original Q-learning method~\citep{watkins1992q} already achieves this property.
Seen from another perspective, this property may seem magical: how can one \emph{guarantee} that a policy trained over easy tasks can \emph{extrapolate} from easy tasks to hard tasks?

We hope to provide a theoretical framework for understanding this property as a form of self-consistency over model architecture, and show how we can obtain and measure this property in practice.
The experiments in \cref{sec:experiments} then connect these insights to concrete advice for structuring the representation for goal-reaching:
\begin{enumerate}[itemsep=1pt,parsep=0pt,topsep=0pt,partopsep=0pt,leftmargin=*]
    \item Policies defined over metric architectures that measure state dissimilarity have \emph{planning invariance}.
    \item  Planning invariance is a desirable feature that is correlated with the notion of \emph{horizon generalization}.
                    \item  Quasimetric architectures provide a realistic approach to achieve planning invariance and horizon generalization.
\end{enumerate}
In \cref{sec:self-consistent}, we discuss further implications of these notions of invariance on self-consistent models for decision-making.

\paragraph{Limitations and Future Work.}
Future work should examine how the properties of planning invariance and horizon generalization are conserved in more complex decision-making environments, such as robotic manipulation and language-based agents.
Which versions of the distance parameterizations in~\cref{tab:methods} are most effective at scale should be investigated with larger-scale empirical experiments.
We assume a goal-conditioned setting, but there are meany alternative forms of task specification (rewards, language, preferences, etc.) that could similarly benefit from generalization over long horizons.
Future work should explore how planning-invariant geometry could be extended or mapped onto these task spaces.

\subsection*{Acknowledgments}
We thank Cindy Zhang, Ryan Adams, and Seohong Park for helpful discussions.
We thank Princeton Research Computing for assistance with the numerical experiments.
We also thank anonymous reviewers who have helped improve the paper.

\bibsep\smallskipamount
\bibliographystyle{custom}
\bibliography{references}
\appendix

\section{Definition of Path Relaxation}
\label{sec:path_relaxation}

In this section, we formally define the general path relaxation operator in \cref{def:path_relaxation_operator_actions}.
This definition extends \cref{def:path_relaxation_operator_fixed} to allow for actions and environmental stochasticity.

\begin{definition}[Path relaxation operator with actions] Let $\textsc{Path}_{d}(s,a,G)$ be the path relaxation operator over quasimetric $d(s,a,G)$. For any triplet of state and state distributions $(s,W,G) \in \gS \times \gP(\gS) \times \gP(\gS)$,
    \label{def:path_relaxation_operator_actions}
    \begin{equation}
        \textsc{Path}_{d}(s,a,G) \triangleq \min_W d(s, a, W) + d(W, G).
        \label{eq:path_relaxation_operator_actions}
    \end{equation}
    In the controlled, fixed goal setting, define
    \begin{equation}
        \textsc{Path}_{d}\fixed(s,a,g) \triangleq \min_w d(s,a, w) + d(w, g).
        \label{eq:path_relaxation_operator_actions_fixed}
    \end{equation}
\end{definition}

The notation $\gP(X)$ used here and throughout the appendix denotes the space of probability distributions over set $X$.

\section{Formalizing Planning Invariance and Horizon Generalization}
\label{app:proofs}

In this section, we prove results discussed in \cref{sec:planning_invariance} and versions of results in \cref{sec:analysis} for the general stochastic, distributional setting.

\subsection{Planning Invariance Exists}
\label{sec:planning_invariance_exists_proof}

\restatetheorem{theorem:planning_invariance_fixed}
\begin{proof}
    Let $s, g \in \gS$ and the action-free distance function be $d(s, g) = \min_a d(s, a, g)$; this statement is true for the constrastive successor distances (\refas{Eq.}{eq:successor_distance}). Define the (deterministic) planned waypoint as
    \begin{equation}
        w_{\textsc{Plan}} \gets \textsc{Plan}_{d}(s,g) \in \argmin_{w\in \gS} d(s, w) + d(w, g).
        \label{eq:waypoint_assignment}
    \end{equation}
    We can then construct the following policy: \begin{equation}
        \pi_{d}(a \mid  s, g) \in \argmin_{a \in \gA} d(s, a, g)
    \end{equation}
    and later restrict the selection of equivalently optimal actions to obtain planning invariance, where $w_{\textsc{Plan}} \in \argmin_{w\in \gS} d(s, w) + d(w, g)$. Applying this policy to $(s,w_{\textsc{Plan}})$, we have that
    \begin{align*}
        \pi_{d}(a \mid s, w_{\textsc{Plan}}) & \in  \argmin_{a \in \gA} d(s, a, w_{\textsc{Plan}})                                 \\
                                             & = \argmin_{a \in \gA} d(s, a, w_{\textsc{Plan}}) + d(w_{\textsc{Plan}}, g)          \\
                                             & = d(s, w_{\textsc{Plan}}) + d(w_{\textsc{Plan}}, g)                                 \\
                                             & \subseteq  \argmin_{a \in \gA} d(s, a, g).                                  \eqmark
    \end{align*}
    Thus, for a given deterministic planning algorithm defined as in \cref{eq:waypoint_assignment}, there exists some deterministic policy $\pi_{d}(a \mid  s,g) = \pi_{d}(a \mid  s, w_{\textsc{Plan}}) \in \argmin_{a \in \gA} d(s, a, w_{\textsc{Plan}}) \subseteq \argmin_{a \in \gA} d(s, a, g)$ that is planning invariant.
\end{proof}

\subsection{Quasimetric Over Distributions}
\label{proof:quasimetric_over_distributions}

\begin{definition}[Quasimetric over distributions]
    \label{def:quasimetric_over_distributions} Let the goal-conditioned MDP $\gM$ be given.

    Given quasimetric $d_{\textsc{qm}}$ defined over start-goal space $\gS \times \gS$, we define the quasimetric over distributions as
    \begin{equation}
        \label{eq:quasimetric_over_distributions}
        d_{\textsc{qmd}}(P,Q) = \inf_{\gamma \in \Pi(P,Q)} \int_{\gS \times \gS} d_{\textsc{qm}}(p,q) \gamma(p,q) \diff{p} \diff{q},
    \end{equation}
    which is the asymmetric Wasserstein Distance with quasimetric cost function $d_{QM}(p,q)$.
    \end{definition}

We can interpret this object as the minimum cost to convert distribution $P$ to $Q$, where the cost function is some quasimetric between individual states.

We show \cref{def:quasimetric_over_distributions} is a valid quasimetric. Because $d_{\textsc{qmd}}$ is an asymmetric Wasserstein distance and cost function $d_{\textsc{qm}}(p,q)$ is a quasimetric, this proof is an extension of a well-known result \cite{clement2008} that drops the metric symmetry condition. We include the proof here for completeness.

\begin{proof}
    We check the conditions of a quasimetric for $d_{\textsc{qmd}}(P,Q)$ with quasimetric cost function $d_{\textsc{qm}}(p,q)$.
    \paragraph{Non-negativity:} By definition of $\gamma (p,q)$ and $d_{\textsc{qm}}(p,q)$, we have $d_{\textsc{qmd}}(P, Q) \geq 0$ for all $P,Q$.

    We show that $d_{\textsc{qmd}}(P, Q) = 0$ if and only if $P = Q$, beginning with the forward direction:
    \begin{align*}
        d_{\textsc{qmd}}(P, P) & = \inf_{\gamma \in \Pi(P,P)} \int_{\mathcal{S} \times \mathcal{S}} d_{\textsc{qm}}(p, q) \gamma (p, q) \, \diff{p} \, \diff{q}                              \\
                               & \leq \int_{\mathcal{S} \times \mathcal{S}} d_{\textsc{qm}}(p, q) \gamma_{D} (p, q) \diff{p} \diff{q} \tag{set \(\gamma\) as diagonal matrix \(\gamma_{D}\)} \\
                               & = \int_{\gS} d_{\textsc{qm}}(p,p) \mu(p) \diff{p}  \tag{where \(\mu(p) = \gamma_{D}(p,p)\)}                                                                 \\
                               & = 0 \tag{\(d_{QM}(p,p)=0\)}
    \end{align*}
    For the other direction, we have that $d_{\textsc{qmd}}(P, Q) = 0$ implies $\gamma (p, q) = 0$ for all $p \neq q$. However, because $\gamma(p,q)$ is a probability distribution, this must mean $P = Q$.

    \paragraph{Asymmetry:} We have that $d_{\textsc{qmd}}(P, Q)$ is not necessarily symmetric because the quasimetric $d_{\textsc{qm}}(p, q)$ is not necessarily symmetric.

    \paragraph{Triangle inequality:} Let $P, Q, R$ be three probability measures. Let $\gamma^{*}_{1,2}$ and $\gamma^{*}_{2,3}$ be minimizers of $d_{\textsc{qmd}}(P,Q)$ and $d_{\textsc{qmd}}(Q, R)$ respectively. We can construct some $\gamma_{1,2,3}(p,q,r)$ such that
    \begin{align*}
        \int_{\gS} \gamma_{1,2,3}(p,q,r) \diff{r}  & = \gamma^{*}_{1,2} \\
        \int_{\gS}\gamma_{1,2,3}(p,q,r) \diff{p}   & = \gamma^{*}_{2,3} \\
        \int_{\gS}  \gamma_{1,2,3}(p,q,r) \diff{q} & = \gamma_{1,3}
    \end{align*}
    where $\gamma_{1,3}$ is {\bf not necessarily} the optimal joint distribution to minimize $d_{\textsc{qmd}}(P, R)$. Then, we have:
    \begin{align*}
        d_{\textsc{qmd}}(P,R) \hspace*{-2em} & \hspace*{2em}\leq \int_{\gS \times \gS} d_{\textsc{qm}}(p,r) \gamma_{1,3} (p,r) \diff{p} \, \diff{r}                                                                                                                            \\
                                             & = \int_{\gS \times \gS \times \gS} d_{\textsc{qm}}(p,r) \gamma_{1,2,3} (p,q,r) \diff{p} \, \diff{q} \, \diff{r}                                                                                                                            \\
                                             & \leq \int_{\gS \times \gS \times \gS} (d_{\textsc{qm}}(p,q) + d_{\textsc{qm}}(q,r)) \, \gamma_{1,2,3} (p,q,r) \diff{p} \, \diff{q} \, \diff{r} \tag{$d_{\textsc{qm}}$ satisfies $\triangle$-ineq}                               \\
                                             & = \int_{\gS \times \gS \times \gS} d_{\textsc{qm}}(p,q) \gamma_{1,2,3} (p,q,r) \diff{p} \, \diff{q} \, \diff{r} + \int_{\gS \times \gS \times \gS} d_{\textsc{qm}}(q,r) \gamma_{1,2,3} (p,q,r) \diff{p} \, \diff{q} \, \diff{r} \\
                                             & = d_{\textsc{qmd}}(P,Q) + d_{\textsc{qmd}}(Q,R)
    \end{align*}
    as desired. Therefore, $d_{\textsc{qmd}}$ is a quasimetric and we are done.
\end{proof}

\subsection{Quasimetrics, Path Relaxation, Policies, and Planning Invariance in Stochastic Settings}
\label{sec:planning_invariance_stochastic_setting}

We extend the definitions of quasimetrics, path relaxation, policies, and planning invariance to the setting of general stochastic MDPs.

\begin{definition}
    [Quasimetric over actions in general stochastic setting] Let the quasimetric over distributions $d_{\textsc{QMD}}$ (\cref{def:quasimetric_over_distributions}) be given. Let $S'_{s,a} = p(s' \mid s,a)$ be the distribution over next-step states after taking action $a$ from starting state $s$. We define the stochastic-setting quasimetric over actions as
    \begin{align*}
        d_{\textsc{QMD}}(s,a,G) & \triangleq d_{\textsc{QMD}}(s, S'_{(s,a)}) + d_{\textsc{QMD}}(S'_{(s,a)}, G).
    \end{align*}
\end{definition}

\begin{definition}[Quasimetric policy in general stochastic setting]
    Given goal-conditioned MDP $\gM$ with states $\gS$, actions $\gA$, and goal-conditioned Kronecker delta reward function $r_{g}(s) = \delta_{(s,g)}$ and quasimetric over distributions $d_{\textsc{QMD}}$ (\cref{def:quasimetric_over_distributions}), we extend the quasimetric policy to stochastic settings:
    \label{def:policy_quasimetric_stochastic}
    \begin{equation}
        \pi_{d}(a\mid s, G) \in \argmin_{a} d_{\textsc{QMD}}(s,a,G).
        \label{eq:policy_quasimetric_stochastic}
    \end{equation}
\end{definition}

We can also generalize the planning class to take in states, actions, and state distributions as inputs:
\begin{equation}
    \planclass \triangleq \{ \textsc{Plan} : \gS \times \gA \times  \probspace(\gS) \mapsto \probspace(\gS)\}.
    \label{eq:plan_function_actions}
\end{equation}
This planner chooses a waypoint distribution conditioned on a given start state, action taken from this state, and a desired future goal distribution.

\begin{definition}
    [Quasimetric planner class in general stochastic setting] Given goal-conditioned MDP $\gM$ and quasimetric over distributions $d_{\textsc{QMD}}$ (\cref{def:quasimetric_over_distributions}), we extend the quasimetric planning class to stochastic settings:
    \begin{align*}
        \planclass_{d} \triangleq
         & \{ \textsc{Plan} \in \planclass \mid
        d_{\textsc{QMD}}(s,a,W) + d_{\textsc{QMD}}(W,G) = d_{\textsc{QMD}}(s,a,G) \\ \qquad &\qquad\qquad\qquad\qquad \qquad  \text{ for all } (s, a, G) \in \gS \times \gA \times \gP(\gS) \text{ where } \textsc{Plan}(s,a,G) = W\}.
    \end{align*}

\end{definition}

The existence of planning invariance in stochastic settings follows from these quasimetric definitions.

\begin{lemma}[Quasimetric policies are planning invariant in general stochastic settings]  Given an MDP with states $\gS$, actions $\gA$, and goal-conditioned Kronecker delta reward function $r_{g}(s) = \delta_{(s,g)}$, define quasimetric policy $\pi_{d}(a\mid s,G)$ and quasimetric planner class $ \planclass_{d}$. Then, for every planning operator \[\textsc{Plan}_{d}(s,a,G) = W_{\textsc{Plan}} \in \argmin_{W \in \gP(\gS)}(d_{\textsc{QMD}}(s,a,W) + d_{\textsc{QMD}}(W,G)),\]
    there exists a quasimetric policy $\pi_{d}(a\mid s,G)$ such that
    \[
        \pi_{d}(a\mid s,G) = \pi_{d}(a\mid s,W_{\textsc{Plan}})
    \]
    which is planning invariance.

    \label{lemma:planning_invariance_stochastic}
\end{lemma}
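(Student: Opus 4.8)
The plan is to replay the deterministic argument of \cref{theorem:planning_invariance_fixed} almost verbatim, replacing the action-free quasimetric $d(s,g)$ with the quasimetric over distributions $d_{\textsc{QMD}}$ and the scalar waypoint $w$ with a waypoint distribution $W$. I take the policy to be $\pi_{d}(a\mid s,G) \in \argmin_{a} d_{\textsc{QMD}}(s,a,G)$ as in \cref{def:policy_quasimetric_stochastic}, and I aim to show that conditioning this policy on the planner's waypoint $W_{\textsc{Plan}} = \textsc{Plan}_{d}(s,a,G)$ leaves its set of minimizing actions unchanged, so that a consistent tie-breaking rule makes $\pi_{d}(a\mid s,G)=\pi_{d}(a\mid s,W_{\textsc{Plan}})$ hold exactly.

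First I would establish the action-level triangle inequality $d_{\textsc{QMD}}(s,a,G) \le d_{\textsc{QMD}}(s,a,W) + d_{\textsc{QMD}}(W,G)$ for every $W \in \gP(\gS)$. This is immediate from the definition $d_{\textsc{QMD}}(s,a,G) = d_{\textsc{QMD}}(s,S'_{(s,a)}) + d_{\textsc{QMD}}(S'_{(s,a)},G)$: the common term $d_{\textsc{QMD}}(s,S'_{(s,a)})$ cancels, and what remains is exactly $d_{\textsc{QMD}}(S'_{(s,a)},G)\le d_{\textsc{QMD}}(S'_{(s,a)},W)+d_{\textsc{QMD}}(W,G)$, which holds because $d_{\textsc{QMD}}$ is a genuine quasimetric (\cref{def:quasimetric_over_distributions}). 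Next I would invoke the defining equality of $\planclass_{d}$, namely $d_{\textsc{QMD}}(s,a,W_{\textsc{Plan}}) + d_{\textsc{QMD}}(W_{\textsc{Plan}},G) = d_{\textsc{QMD}}(s,a,G)$ at the selected action, to add the constant $d_{\textsc{QMD}}(W_{\textsc{Plan}},G)$ — constant in $a$ once the waypoint is frozen, as discussed below — to the objective without disturbing the minimizer. Combining the two facts gives $\min_{a}[d_{\textsc{QMD}}(s,a,W_{\textsc{Plan}}) + d_{\textsc{QMD}}(W_{\textsc{Plan}},G)] = \min_{a} d_{\textsc{QMD}}(s,a,G)$, and hence $\argmin_{a} d_{\textsc{QMD}}(s,a,W_{\textsc{Plan}}) \subseteq \argmin_{a} d_{\textsc{QMD}}(s,a,G)$, exactly the inclusion that closed the deterministic proof.

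The step I expect to be the main obstacle is that, unlike the deterministic planner $\textsc{Plan}(s,g)$, the stochastic planner $\textsc{Plan}_{d}(s,a,G)$ carries an explicit dependence on the action $a$ through the next-state distribution $S'_{(s,a)}$, so $d_{\textsc{QMD}}(W_{\textsc{Plan}},G)$ is not automatically a constant inside $\argmin_{a}$. I would resolve this by exploiting the existence framing of the lemma: fix an optimal action $a^{\ast}\in\argmin_{a} d_{\textsc{QMD}}(s,a,G)$ that the policy commits to, form the waypoint $W_{\textsc{Plan}} = \textsc{Plan}_{d}(s,a^{\ast},G)$ at that action, and only then run the $\argmin_{a}$ comparison against this now-fixed $W_{\textsc{Plan}}$. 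Since the planner equality certifies that $a^{\ast}$ still attains the minimum of $d_{\textsc{QMD}}(s,a,W_{\textsc{Plan}})$, choosing $\pi_{d}$ to select $a^{\ast}$ under both conditionings yields planning invariance. The distributional objects ($d_{\textsc{QMD}}$, $S'_{(s,a)}$, and waypoint distributions $W$) make the bookkeeping heavier, but the logical skeleton is identical to the fixed case.
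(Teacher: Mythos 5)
Your proposal is correct and follows essentially the same route as the paper's proof: both rest on the decomposition $d_{\textsc{QMD}}(s,a,G)=d_{\textsc{QMD}}(s,S'_{(s,a)})+d_{\textsc{QMD}}(S'_{(s,a)},G)$, the triangle inequality for $d_{\textsc{QMD}}$, and the planner equality defining $\planclass_{d}$, culminating in the inclusion $\argmin_{a} d_{\textsc{QMD}}(s,a,W_{\textsc{Plan}})\subseteq\argmin_{a} d_{\textsc{QMD}}(s,a,G)$ and a consistent tie-breaking choice. Your explicit handling of the dependence of $W_{\textsc{Plan}}$ on $a$ (freezing an optimal $a^{\ast}$, planning at $a^{\ast}$, then checking via the planner equality and the triangle inequality that $a^{\ast}$ still minimizes $d_{\textsc{QMD}}(s,\cdot,W_{\textsc{Plan}})$) is sound and, if anything, more careful than the paper, which silently treats $W_{\textsc{Plan}}$ as constant inside the $\argmin$ over actions.
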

\begin{proof}
    For any start-goal distribution $(s, G) \in \gS \times \gP(\gS)$ pair,
    \begin{align*}
        \min_a d_{\textsc{QMD}}(s, a, G)
        &= \min_{a} d_{\textsc{QMD}}(s,S'_{(s,a)}) + d_{\textsc{QMD}}(S'_{(s,a)},G) \tag{by definition
            (\cref{def:quasimetric_over_distributions})} \\
        &= \min_{a} \min_{W} d_{\textsc{QMD}}(s,S'_{(s,a)}) + d_{\textsc{QMD}}(S'_{(s,a)},W) +
            d_{\textsc{QMD}}(W,G) \tag{$\triangle$-ineq} \\
        &= \min_{a} \min_W d_{\textsc{QMD}}(s, a, W) + d_{\textsc{QMD}}(W, G)
            \eqmark
    \end{align*}
    From \cref{def:policy_quasimetric_stochastic}, let quasimetric policy $\pi$ be
    \begin{align*}
        \pi_{d}(a \mid  s, G) & \in \argmin_{a \in \gA} d_{\textsc{QMD}}(s,a,G).
    \end{align*}
    Now, applying this policy to state-waypoint distribution pair $(s,W_{\textsc{Plan}}) \in \gS \times \gP(\gS)$,
    \begin{align*}
        \pi(a|s,W_{\textsc{Plan}})
        &\in \argmin_{a \in \gA} d_{\textsc{QMD}}(s,a,W_{\textsc{Plan}}) \\
        &= \argmin_{a \in \gA} d_{\textsc{QMD}}(s,a,W_{\textsc{Plan}}) +
            d_{\textsc{QMD}}(W_{\textsc{Plan}},G) \\
        &\subseteq \argmin_{a \in \gA} d_{\textsc{QMD}}(s,a,G) \eqmark
    \end{align*}
    as desired. Thus, for any quasimetric planner $\textsc{Plan}_{d}(s,a,G)$, there exists some quasimetric policy
    \begin{align}
        \pi_{d}(a \mid s,G) = \pi_{d}(a \mid s, W_{\textsc{Plan}})
        &\in \argmin_{a \in \gA} d_{\textsc{QMD}}(s,a,W_{\textsc{Plan}}) \nonumber \\
        &\subseteq \argmin_{a \in \gA} d(s,a,G),
    \end{align}
    as desired.
\end{proof}

\subsection{Horizon generalization exists}
\label{sec:horizon_generalization_stochastic_setting}

\restatetheorem{theorem:horizon_generalization}

\begin{proof}
    We prove the more general result for policies $\pi_{d}(a \mid s, G)$ defined over start-goal distribution pairs $(s,G)$. See earlier sections in \cref{sec:planning_invariance_stochastic_setting} for quasimetric, policy, and planning definitions over distributions.

    \begin{lemma} For a goal-conditioned MDP with states $\gS$, actions $\gA$, and goal-conditioned Kronecker delta reward function $r_{g}(s)=\delta_{(s,g)}$. Let finite thresholds $c>0$ and quasimetrics $d_{\textsc{QMD}}(s,G)$ over the start-goal distribution space $\gS \times \gP(\gS)$ be given. Then, a quasimetric policy $\pi_{d}(a\mid s,G)$ that is optimal over $\cB_{c} = \{(s,G) \in \mathcal{S \times \gP(S)} \mid d(s,G) < c\}$ is optimal over the entire start-goal distribution space $\gS \times \gP(\gS)$.
        \label{lemma:horizon_generalization_exists}
    \end{lemma}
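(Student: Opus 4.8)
The plan is to prove \cref{lemma:horizon_generalization_exists} by induction on a geometrically growing distance threshold, using planning invariance (\cref{lemma:planning_invariance_stochastic}) to push optimality outward one doubling at a time. For each start state $s \in \gS$ and each integer $n \geq 0$, I would define the distributional ball $\gD_{n}(s) \triangleq \{G \in \gP(\gS) \mid d_{\textsc{QMD}}(s,G) < c\,2^{n}\}$. The statement to establish by induction on $n$ is that $\pi_{d}(a \mid s,\cdot)$ is optimal over $\gD_{n}(s)$ for every $s \in \gS$. Since the goal-reaching distances are finite (there are no states outside $\gS$) and $\gS,\gA$ are compact, the diameter $\sup_{s,G} d_{\textsc{QMD}}(s,G)$ is bounded, so some $c\,2^{n}$ eventually exceeds it and $\bigcup_{n}\gD_{n}(s) = \gP(\gS)$; the induction then yields optimality over all of $\gS \times \gP(\gS)$.

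For the base case $n = 0$, the hypothesis that $\pi_{d}$ is optimal over $\cB_{c} = \{(s,G) \mid d_{\textsc{QMD}}(s,G) < c\}$ is exactly optimality over $\gD_{0}(s)$ for every $s$, so nothing further is needed.

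For the inductive step I would fix $s$ and a target $G$ with $d_{\textsc{QMD}}(s,G) < c\,2^{n+1}$, and invoke the quasimetric planner $\textsc{Plan}_{d} \in \planclass_{d}$ to produce a waypoint $W$ on the geodesic, so that the defining tightness condition $d_{\textsc{QMD}}(s,W) + d_{\textsc{QMD}}(W,G) = d_{\textsc{QMD}}(s,G)$ holds. The aim is to select $W$ so that both halves obey $d_{\textsc{QMD}}(s,W) < c\,2^{n}$ and $d_{\textsc{QMD}}(W,G) < c\,2^{n}$, placing $W \in \gD_{n}(s)$ and $G \in \gD_{n}(W)$. Granting such a $W$, planning invariance gives $\pi_{d}(a \mid s,G) = \pi_{d}(a \mid s,W)$, the inductive hypothesis makes the shared action optimal toward $W$, and the additive decomposition $\min_{a} d_{\textsc{QMD}}(s,a,G) = \min_{a} d_{\textsc{QMD}}(s,a,W) + d_{\textsc{QMD}}(W,G)$ from the proof of \cref{lemma:planning_invariance_stochastic} transfers this optimality from $W$ to $G$. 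A clean way to package the transfer is to note that the post-action distribution $S'_{(s,a)}$ satisfies $d_{\textsc{QMD}}(S'_{(s,a)},G) < d_{\textsc{QMD}}(s,G)$ and lands in a ball already covered, so the remainder of the rollout is also handled by the hypothesis.

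The main obstacle I anticipate is guaranteeing the splitting waypoint $W$ with \emph{both} geodesic halves strictly below $c\,2^{n}$: in a general quasimetric this approximate midpoint need not exist, and the planner's tightness condition only controls the \emph{sum} of the two halves, not their individual sizes. I would resolve this by exploiting the MDP trajectory structure — an optimal rollout from $s$ toward $G$ realizes a chain of intermediate (distributional) states along which the accumulated distance increases, and compactness of $\gS,\gA$ lets me select the intermediate point at which the accumulated distance first crosses $c\,2^{n}$, forcing both halves below the threshold. Care is also needed to confirm that optimality of the shared first action toward $W$ genuinely implies optimality toward $G$ rather than merely matching a greedy distance; this is exactly where the tightness of $\planclass_{d}$ and the additive form of $d_{\textsc{QMD}}(s,a,\cdot)$ do the real work.
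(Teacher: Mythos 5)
Your proposal follows essentially the same route as the paper's proof: induction over the doubling balls $\gD_{n}(s) = \{G \in \gP(\gS) \mid d_{\textsc{QMD}}(s,G) < c\,2^{n}\}$, with the base case given directly by the hypothesis on $\cB_{c}$, and an inductive step that combines the quasimetric planner's tightness condition with planning invariance (\cref{lemma:planning_invariance_stochastic}) to transfer optimality from a waypoint in $\gD_{n}(s)$ to goals in $\gD_{n+1}(s)$. The one place you diverge is the step you correctly flag as the main obstacle: guaranteeing a waypoint whose two geodesic halves both lie below $c\,2^{n}$. You propose a first-crossing argument along an optimal rollout; the paper instead rewrites $d(s,a,G)$ as a nested minimization, $\min_{W_{\textsc{in}} \in \gD_{n}(s)} \min_{W_{\textsc{out}} \in \gD_{n+1}(s)\setminus\gD_{n}(s)} d(s,a,W_{\textsc{in}}) + \bigl(d(W_{\textsc{in}},W_{\textsc{out}}) + d(W_{\textsc{out}},G)\bigr)$, and collapses it by triangle-inequality tightness to conclude that some $W_{\textsc{Plan, in}} \in \gD_{n}(s)$ satisfies $d(s,a,W_{\textsc{Plan, in}}) + d(W_{\textsc{Plan, in}},G) = d(s,a,G)$, after which the inductive hypothesis and planning invariance finish the step exactly as in your sketch. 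Note that the paper's decomposition rests on the same implicit assumption you identify \-- that \emph{exact} relaxation through the inner ball is achievable, an approximate-midpoint/geodesic property that need not hold in an arbitrary quasimetric \-- so your plan is at the same level of rigor as the published argument; your rollout-based resolution, if carried out with attention to discretization (a single transition can overshoot the $c\,2^{n}$ boundary unless per-step distances are small relative to $c$), would in fact make this hidden assumption explicit rather than leaving it buried in a chain of claimed equalities.
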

    Note that we can recover the fixed, deterministic action and goal setting (``fixed'' setting) by letting goal-distribution $G$ be a Dirac delta function at a single goal $g$.
    
    We prove \cref{lemma:horizon_generalization_exists} using induction. Assume optimality over $\mathcal{B}_{n} = \{(s,G) \in \gS \times \probspace(\gS) \mid  d(s,G) < c2^{n}\}$ for arbitrary $n \in \mathbb{Z}^{+}$.
    Without loss of generality, consider arbitrary state $s \in \mathcal{S}$ and all goal distributions $\gD_{n}(s) = \{G\in \probspace(\mathcal{S}) \mid  d(s,G) < c2^{n}\}$.

    We can consider the space of distributions $\gD_{n}(G)$ that are $c2^{n}$ distance away from goal distribution $G \in \gD_{n}(s)$:
    \begin{align}
        \gD_{n}(G) & = \{S' \in \probspace(\gS) \mid  d(G,S') < c2^{n}, G\in \gD_{n}(s)\} \nonumber \\
                   & = \{S' \in \probspace(\mathcal{S}) \mid  d(s,S') < c2^{n+1}\} \nonumber        \\
                   & = \gD_{n+1}(s)
    \end{align}
    where we correspondingly define the ball of start-goal distribution pairs drawn from $\gD_{n}(G)$ as
    \begin{align*}
        \gB'_{n}
         & = \{(s,S') \in \gS \times \gP(\gS)\mid S' \in \gD_{n}(G), G \in \gD_{n}(s)\} \\
         & = \{(s,S')\in \gS \times \gP(\gS) \mid S' \in \gD_{n+1}(s)\}                 \\
         & = \gB_{n+1}. \eqmark
    \end{align*}

    Invoking the definition of the quasimetric policy $\pi_{d}(a\mid s,S')$, for some waypoint distribution $W_{\textsc{Plan}} \in \argmin_{W \in \gD_{n+1}(s)}(d_{\textsc{QMD}}(s,a,W) + d_{\textsc{QMD}}(W,G))$ and goal distribution $G \in \gD_{n+1}(s)$:
    \begin{align*}
        \pi_{d}(a\mid s,G) \in \argmin_{a \in \gA} d_{\textsc{QMD}}(s,a,W_{\textsc{Plan}}).
    \end{align*}
    To show that there always exists some planned waypoint distribution $W_{\textsc{Plan}}$ within the region of assumed optimality $\gD_{n}$ from the inductive assumption, we consider the case $W_{\textsc{Plan}} \notin \gD_{n}(s)$ and show that there exists some $W_{\textsc{Plan, in}} \in \gD_{n}$ such that \[d_{\textsc{QMD}}(s,a,W_{\textsc{Plan, in}}) + d_{\textsc{QMD}}(W_{\textsc{Plan, in}}, G) = d_{\textsc{QMD}}(s,a,G).\] We drop the $\textsc{QMD}$ subscript on quasimetric $d$ for readability. By the triangle inequality,
    \begin{align*}
        d(s,a,G)
         & = \min_{W \in \gD_{n+1}(s)}(d(s,a,W) + d(W,G))                                                     \\
         & = d(s,a,W_{\textsc{Plan}}) + d(W_{\textsc{Plan}},G)                                                \\
         & = \min_{W_{\textsc{out}} \in \gD_{n+1}(s)\setminus \gD_{n}(s)} d(s,a,W_{\textsc{out}}) +
        d(W_{\textsc{out}},G)                                                                                 \\
         & = \min_{W_{\textsc{out}} \in \gD_{n+1}(s) \setminus \gD_{n}(s)} \min_{W_{\textsc{in}} \in \gD_{n}}
        \bigl(d(s,a,W_{\textsc{in}}) + d(W_{\textsc{in}}, W_{\textsc{out}})\bigr) + d(W_{\textsc{out}},G)     \\
         & = \min_{W_{\textsc{in}} \in \gD_{n}(s)} \min_{W_{\textsc{out}} \in \gD_{n+1}(s) \setminus
            \gD_{n}(s)} d(s,a,W_{\textsc{in}}) + \bigl(d(W_{\textsc{in}}, W_{\textsc{out}}) + d(W_{\textsc{out}},G)\bigr)
        \\
         & = \min_{W_{\textsc{in}} \in \gD_{n}(s)} d(s,a,W_{\textsc{in}}) + d(W_{\textsc{in}}, G)
        \tag{$\triangle$-ineq}                                                                                \\
         & = d(s,a,W_{\textsc{Plan, in}}) + d(W_{\textsc{Plan, in}}, G), \eqmark
    \end{align*}
    for all $s \in \gS$. Thus, there always exists an optimal state-waypoint distribution pair within the assumed optimality region $\gB_{n}$; we can thus restrict $(s,W_{\textsc{Plan}}) \in \gB_{n}$.

    Therefore, with the previously defined quasimetric policy $\pi_{d}(a\mid s,G)$,
    \begin{align*}
        \pi_{d}\bigl(a \mid (s, W_{\textsc{Plan}}) \in \gB_{n}\bigr)
         & \in \argmin_{a\in\gA}d(s,a,W_{\textsc{Plan}}) \tag{inductive assumption}                       \\
         & \subseteq \argmin_{a\in\gA}d(s,a,G),\tag{\cref{lemma:planning_invariance_stochastic}: planning
            invariance},
    \end{align*}
    so, the policy $\pi_{d}(a\mid s,G)$ is optimal over $\mathcal{B}_{n+1}$ following the inductive assumption. Since we assume the base case holds everywhere, \cref{theorem:horizon_generalization} follows.
\end{proof}

\subsection{Horizon generalization is nontrivial}
\label{proof:horizon_generalization_nontrivial}

We observe that planning invariance and horizon generalization can be arbitrarily violated for general policies and MDPs.

\restatetheorem{remark:horizon_generalization_nontrivial}
\begin{proof}
    We restrict our proof to the fixed, controlled setting and let quasimetric $d(s,g)$ be the successor distance $d_{\textsc{SD}}(s,g)$~\citep{myers2024learning} \-- this assumption lets us directly equate the optimal horizon $H$ to the distance $d_{\textsc{SD}}(s,g)$, but note that similar arguments can be applied by treating $d(s,g)$ as a generalized notion of horizon.

    Consider goal-conditioned policy $\pi^{*, H}(a \mid s,g)$ that is optimal for $(s,g)$ pairs over some horizon $H$.
    Assume there is at least one goal $g'$ that is optimally $H + 1$ actions away from $s$, and that there exists some optimal waypoint $s'$ on the way to $g'$ reachable via actions $\gA' \subset  \gA$ (where $\gA \setminus \gA'$, the set of suboptimal actions, is nonempty).

    We can then construct a policy $\pi^{H+1}$ where (1) $\pi^{H+1}(a \mid s,g')$ returns an action in the suboptimal set $\gA \setminus \gA'$ and (2) $\pi^{H+1}$ restricted to start-goal pairs horizon $H$ apart is equivalent to $\pi^{*, H}$. Therefore, an arbitrary, non-planning invariant goal-reaching policy does not necessarily exhibit horizon generalization. \end{proof}

\section{New Methods for Planning Invariance}
\label{sec:new-methods}

While the aim of this paper is not to propose a new method, we will discuss several new directions that may be examined for achieving planning invariance.
\paragraph{Representation learning.}
As shown in \cref{fig:planning_invariance}, planning invariance implies that some internal representation inside a policy must map start-goal inputs and start-waypoint inputs to similar representations.
What representation learning objective would result in representations that, when used for a policy, guarantee horizon generalization?\footnote{The construction in our proof is a degenerate case of this, where the internal representations are equal to the output actions.}
The fact that plans over representations sometimes correspond to geodesics~\citep{tenenbaum2000global, eysenbach2024inference} hints that this may be possible.

\paragraph{Flattening hierarchical methods.} While hierarchical methods often achieve higher success rates in practice,  it remains unclear why flat methods cannot achieve similar performance  given the same data.
While prior work has suggested that hierarchicies may aid in exploration~\citep{nachum2019does}, it may be the case that they (somehow) exploit the metric structure of the problem. Once this inductive bias is identified, it may be possible to imbue it into a ``flat'' policy so that it can achieve similar performance (without the complexity of hierarchical methods).

\paragraph{Policies that learn to plan.} While explicit planning methods may be invariant to planning, recent work has suggested that certain policies can \emph{learn} to plan when trained on sufficient data~\citep{chane2021goal, lee2024supervised}. Insofar as neural networks are universal function approximators, they may learn to approximate a planning operator internally.
The best way of learning such networks that implicitly learn to perform planning remains an open question.

\paragraph{MDP reductions.} Finally, is it possible to map one MDP to another MDP (e.g., with different observations, with different actions) so that any RL algorithm applied to this transformed MDP automatically achieves the planning invariance property?

\section{Self-Consistent Models}
\label{sec:self-consistent}

In machine learning, we usually strive for \emph{consistent} models: ones that faithfully predict the training data.
Sometimes (often), however, a model that is consistent with the training data may be inconsistent with  other yet-to-be-seen training examples.
In the absence of infinite data, one way of performing model selection is to see whether a model's predictions are self-consistent with one another.
This is perhaps most easily seen in the case of metric learning, as studied in this paper.
If we are trying to learn a metric $d(x, y)$, then the properties of metrics tell us something about the predictions that our model should make, both on seen and unseen inputs.
For example, even on unseen inputs, our model's predictions should obey the triangle inequality.
Given many candidate models that are all consistent with the training data, we may be able to rule out some of those models if their predictions on unseen examples are not ``logically'' consistent (e.g., if they violate the triangle inequality).
\textit{One way of interpreting quasimetric neural networks is that they are architecturally constrained to be self-consistent.}  We will discuss a few implications of this observation.

\paragraph{Do self-consistent models know what they know?}
What if we assume that quasimetric networks can generalize? That is, after learning that (say) $s_1$ and $s_2$ are 5 steps apart, it will predict that similar states $s_1'$ and $s_2'$ are also 5 steps apart. Because the model is architecturally constrained to be a quasimetric, this prediction (or ``hallucination'') could also result in changing the predictions for other $s$-$g$ pairs. That is, this new ``hallucinated'' edge $s_1' \longrightarrow s_2'$ might result in path relaxation for yet other edges.

\paragraph{What other sorts of models are self-consistent?}
There has been much discussion of self-consistency in the language-modeling literature~\citep{huang2022large, irving2018ai}. Many of these methods are predicated on the same underlying as self-consistency in quasimetric networks: checking whether the model makes logically consistent predictions on unseen inputs. Logical consistency might be used to determine that a prediction is unlikely, and so the model can be updated or revised to make a different prediction instead.

There is an important difference between this example and the quasimetrics.
While the axiom used for checking self-consistency in quasimetrics was the triangle inequality, in this language modeling example self-consistency is checked using the predictions from the language model itself.
In the example of quasimetrics, our ability to precisely write down a mathematical notion of consistency enabled us to translate that axiom into an architecture that is self-consistent with this property.
This raises an intriguing question: \emph{Can we quantify the rules of logic in such a way that they can be translated into a logically self-consistent language model?} What makes this claim seem alluringly tangible is that there is abundant literature from mathematics and philosophy on quantifying logical rules~\citep{whitehead1927principia}.

\section{Evidence of Horizon Generalization and Planning Invariance from Prior Work}
\label{sec:prior-evidence}

\begin{figure}[htb]
    \centering
    \begin{subfigure}[b]{\linewidth}
        \includegraphics[width=\linewidth]{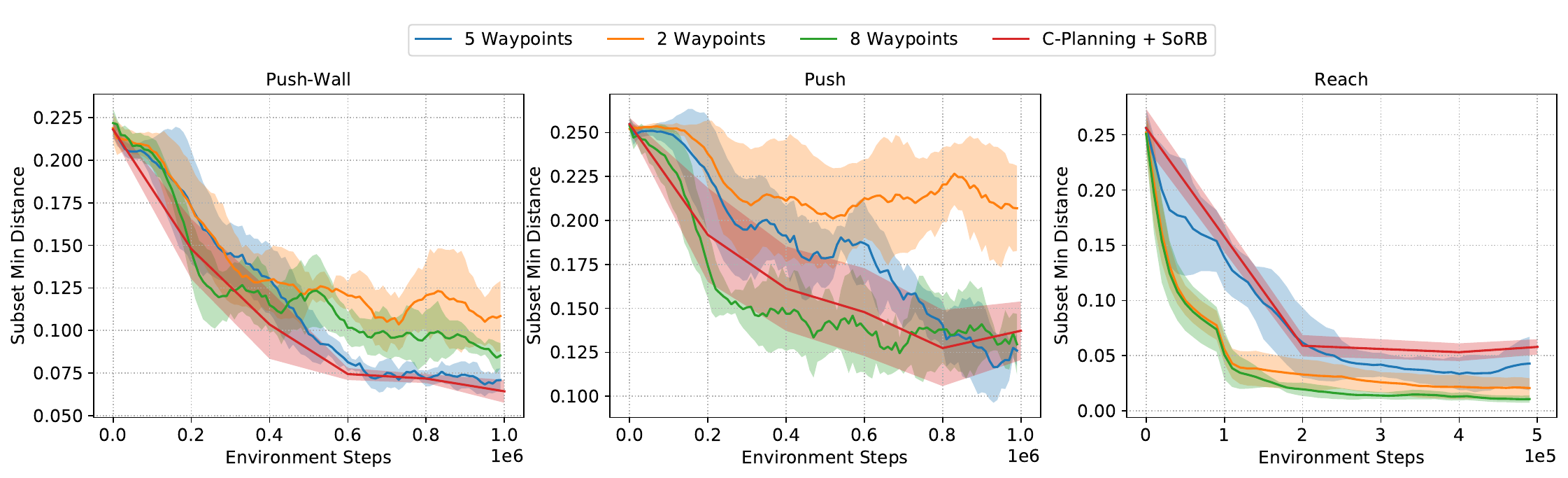}
        \caption{\citet{zhang2021c}}
    \end{subfigure}
    \begin{subfigure}[b]{\linewidth}
        \includegraphics[width=\linewidth]{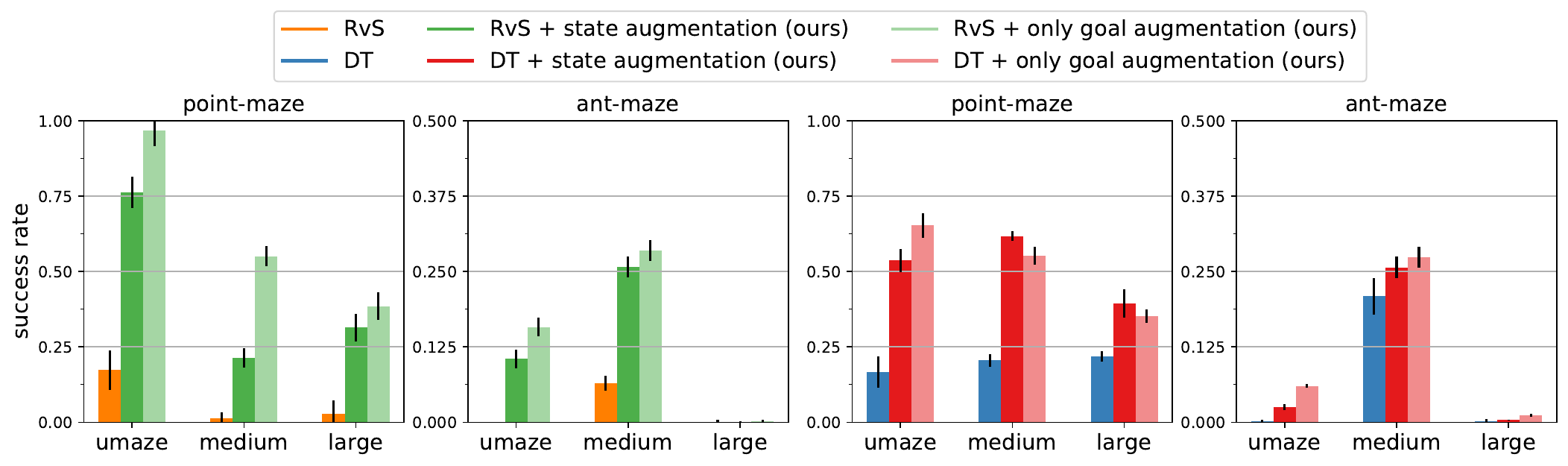}
        \caption{\citet{ghugare2024closing}}
    \end{subfigure}

                \caption{\textbf{Evidence of Horizon Generalization and Planning Invariance from Prior work.}
        \emph{(a)} \, Prior work has observed that if policies are trained in an online setting and perform planning during exploration, then those policies see little benefit from doing planning during evaluation.
        This observation suggests that these policies may have learned to be planning invariant.
        While results are not stratified into training and testing tasks, we speculate that the faster learning of that method (relative to baselines, not shown) may be explained by the policy generalizing from easy tasks (which are learned more quickly) to more difficult tasks.
        \emph{(b)} \, Prior work studies how data augmentation can facilitate combinatorial generalization.
        While the notion of combinatorial generalization studied there is slightly from horizon generalization, a method that performs combinatorial generalization would also achieve effective horizon generalization.
                }
    \label{fig:prior-evidence}
\end{figure}

Not only do the experiments in \cref{sec:experiments} provide evidence for horizon generalization and planning invariance, but we also can find evidence of these properties in the experiments run by prior work.
This section reviews three such examples, with the corresponding figures from prior work in~\cref{fig:prior-evidence}:
\begin{enumerate}[leftmargin=*]
    \item \citet{zhang2021c} propose a method for goal-conditioned RL in the online setting that performs planning during exploration.
          While not the main focus of the paper, an ablation experiment in that paper hints that their method may have some degree of planning invariance: after training, the policy produced by their method is evaluated both with and without planning, and achieves similar success rates.
          This experiment hints at another avenue for achieving planning invariance: rather than changing the architecture or learning rule, simply changing how data are collected may be sufficient.
    \item \citet{ghugare2024closing} propose a method for goal-conditioned RL in the offline setting that performs temporal data augmentation.
          Their key result, reproduced above, is that the resulting method generalizes better to unseen start-goal pairs, as compared with a baseline.
          While this notion of generalization is not exactly the same as horizon generalization (unseen start-goal pairs may still be close to one another), the high success rates of the proposed method suggest that method does not \emph{just} generalize to nearby start-goal pairs, but also exhibits horizon generalization by succeeding in reaching unseen distant start-goal pairs.
          \end{enumerate}

\section{Experiment Details}
\label{app:details}

The following subsections discuss the environment details for the figures in the main text.

\subsection{Didactic Maze: \Cref{fig:planning_invariance}}
\label{app:planning_invariance}

This task is a maze with walls shown as in \cref{fig:planning_invariance}. The dynamics are deterministic. There are 5 actions, corresponding to the cardinal directions and a no-op action.

For this plot, we generated data from a random policy, using 1000 trajectories of length 200.
We estimated distances using Monte Carlo regression.
The left two subplots were generated by selecting actions uses these Monte Carlo distances.
We computed the true distances by running Dijkstra's algorithm.
The right two subplots show actions selected using Dijkstra's algorithm.

\subsection{Tabular Maze Navigation: \Cref{fig:s-exps-full} (Top)}
\label{app:s_exps_top}
This plot used the same environment as described in \cref{app:planning_invariance}.
For this plot, we generated 3000 trajectories of length 50 using a random policy.
Only 14\% of start-goal pairs have any trajectory between them, meaning that the vast majority of start-goal pairs have never been seen together during training. Thus, this is a good setting for studying generalization.

We first estimated distances using Monte Carlo regression.
We select actions using a Boltzmann policy with temperature 0.1 (i.e., $\pi(a \mid  s, g) \propto e^{-0.1 d(s, g)}$).
Evaluation is done over 1000 randomly-sampled start-goal pairs.
The X axis is binned based on the shortest path distance.
The data are aggregated so that start-goal pairs with distance between (say) 20 and 30 get plotted at $x = 30$.
The ``metric regression + quasimetric'' distances are obtained by performing path relaxation on these Monte Carlo distances until convergence. The corresponding policy is again a Boltzmann policy with temperature 0.1.

For the Top Right subplot, we perform planning using Dijkstra's algorithm.
We first identify a set of candidate midpoint states where $d(s, w)$ and $d(w, g)$ are both within one unit of half the shortest path distance.
We then randomly sample a midpoint state.
This planning is done anew at every timestep.

\subsection{Learned Maze Navigation: \Cref{fig:s-exps-full} (Bottom)}
\label{app:s_exps_bottom}

This plot used the same environment as described in \cref{app:planning_invariance}.
The CRL method refers to~\citep{eysenbach2022contrastive} and CMD refers to~\citep{myers2024learning}.
We used  a representation dimension of 16, a batch size of 256, neural networks with 2 hidden layers of width 32 and Swish activations, $\gamma = 0.9$, and Adam optimizer with learning rate $3 \cdot 10^{-3}$.
The loss functions and architectures are based on those from~\citep{bortkiewicz2024accelerating}.

For the Bottom Right subplot, we performed planning in the same way as for the Top Right subplot.

\subsection{JaxGCRL Benchmark Environments}
\paragraph{Ant (\Cref{fig:ant}):}
\label{app:ant}
For this task we used a version of the Ant environment from~\citet{bortkiewicz2024accelerating} modified to have variable start positions and distances to the goal.
All other hyperparameters are kept as the defaults from that paper.
Training is done for 100M steps

\paragraph{AntMaze and Humanoid (\Cref{fig:additional_envs}):}
\label{app:antmaze_humanoid}
Both environments are modified versions of the AntMaze and Humanoid environments from~\citet{bortkiewicz2024accelerating}.
The CMD-1 and CRL methods (with the backward infoNCE loss and $\ell_2$ distance parameterization) were evaluated at the listed distances and twice as far, and the ratio of the two success rates was used to compute $\eta$.

\subsection{Long Maze: \Cref{fig:bellman}}
\label{app:bellman}

\begin{wrapfigure}{R}{0.5\linewidth}
    \centering
    \includegraphics[width=.9\linewidth]{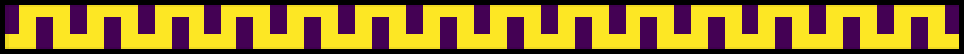}
    \smallskip
    \caption{Long S-shaped maze.}
    \label{fig:s-env}
\end{wrapfigure}
For this experiment we used an S-shaped maze, shown in \cref{fig:s-env}.\footnote{We used this maze in preliminary versions of other experiments, but opted for the larger maze in the other paper experiments because the results were easier to visualize.} The dynamics are the same as those of \cref{fig:planning_invariance}.

We collected 3000 trajectories of length 10 and applied CRL with a representation dimension of 16, a batch size of 256, neural networks with 2 hidden layers of width 32 and Swish activations, the backward NCE loss~\citep{bortkiewicz2024accelerating}, $\gamma = 0.9$, using the Adam optimizer with learning rate $3\cdot 10^{-3}$.
We measured the Bellman error as follows, where $x_0, x_1, x_T$ are the current, immediate next, and future states:

\begin{minted}{python}
pdist = metric_fn.apply(params, x0[:, None], xT[None])
pdist_next = metric_fn.apply(params, x1[:, None], xT[None])
td_target = (1 - gamma) * (x1 == xT[None, :, 0])
    + gamma * jax.nn.softmax(pdist_next, axis=1)
bellman = optax.kl_divergence(
    td_target, jax.nn.softmax(pdist, axis=1)
).mean()
\end{minted}

For the success rates in the Left subplot, we stratify goals into ``easy'' (less than 100 steps away, under an optimal policy) and ``distant'' (more than 100 steps away).

We repeated this experiment 10 times for generate the standard errors shown in both the Left and Right subplots.

\end{document}